\newcommand{\R}[1]{\mathbb{R}^{#1}}
\newcommand{\RR}[2]{\mathbb{R}^{#1 \times #2}}
\newcommand{\conv}[1]{\mbox{conv}\left(#1\right)}
\newcommand{\diag}{\mbox{diag}}
\newcommand{\proj}{\mathcal{P}}
\newcommand{\prox}{\mbox{prox}}
\newcommand{\st}{\mbox{s.t.~}}
\newcommand{\refLemma}[1]{Lemma~\ref{#1}}
\newcommand{\refEq}[1]{(\ref{#1})}
\newcommand{\refFig}[1]{Figure~\ref{#1}}
\newcommand{\refProp}[1]{Proposition~\ref{#1}}
\newcommand{\refSec}[1]{Section~\ref{#1}}
\newcommand{\refAlg}[1]{Algorithm~\ref{#1}}
\newcommand{\refApp}[1]{Appendix~\ref{#1}}
\newtheorem{lemma}{Lemma}
\newtheorem{proposition}{Proposition}
\def\bfc{{\boldsymbol{c}}}
\def\bfm{{\boldsymbol{m}}}
\def\bfr{{\boldsymbol{r}}}
\def\bfA{{\boldsymbol{A}}}
\def\bfB{{\boldsymbol{B}}}
\def\bfC{{\boldsymbol{C}}}
\def\bfE{{\boldsymbol{E}}}
\def\bfI{{\boldsymbol{I}}}
\def\bfM{{\boldsymbol{M}}}
\def\bfQ{{\boldsymbol{Q}}}
\def\bfR{{\boldsymbol{R}}}
\def\bfS{{\boldsymbol{S}}}
\def\bfT{{\boldsymbol{T}}}
\def\bfU{{\boldsymbol{U}}}
\def\bfV{{\boldsymbol{V}}}
\def\bfW{{\boldsymbol{W}}}
\def\bfX{{\boldsymbol{X}}}
\def\bfY{{\boldsymbol{Y}}}
\def\bfZ{{\boldsymbol{Z}}}
\def \bfsigma {\boldsymbol{\sigma}}
\def\bfzero{{\boldsymbol{0}}}
\def\bfone{{\boldsymbol{1}}}
\def\half{\frac{1}{2}~}
\begin{document}

\title{Sparse Representation for 3D Shape Estimation: A Convex Relaxation Approach}

\author{Xiaowei~Zhou,~\IEEEmembership{}
        Menglong Zhu, ~\IEEEmembership{}
        Spyridon Leonardos,~\IEEEmembership{}
        and~Kostas~Daniilidis,~\IEEEmembership{Fellow,~IEEE}
\IEEEcompsocitemizethanks{\IEEEcompsocthanksitem The authors are with Computer and Information Science Department and GRASP Laboratory, University of Pennsylvania, Philadelphia,
PA, 19104.\protect\\
E-mail: xiaowz@seas.upenn.edu}
\thanks{To appear in IEEE Transactions on Pattern Analysis and Machine Intelligence. \url{http://ieeexplore.ieee.org/document/7558185/}}}

\markboth{}%
{Zhou \MakeLowercase{\textit{et al.}}: Bare Advanced Demo of IEEEtran.cls for Journals}

\IEEEtitleabstractindextext{%
\begin{abstract}

We investigate the problem of estimating the 3D shape of an object defined by a set of 3D landmarks, given their 2D correspondences in a single image. A successful approach to alleviating the reconstruction ambiguity is the 3D deformable shape model and a sparse representation is often used to capture complex shape variability. But the model inference is still challenging due to the nonconvexity in the joint optimization of shape and viewpoint. In contrast to prior work that relies on an alternating scheme whose solution depends on initialization, we propose a convex approach to addressing this challenge and develop an efficient algorithm to solve the proposed convex program. We further propose a robust model to handle gross errors in the 2D correspondences. We demonstrate the exact recovery property of the proposed method, the advantage compared to several nonconvex baselines and the applicability to recover 3D human poses and car models from single images.
\end{abstract}
\begin{IEEEkeywords}
3D reconstruction, sparse representation, convex optimization.
\end{IEEEkeywords}}

\maketitle

\section{Introduction}

\begin{figure*}
  \centering
  \includegraphics[width=0.9\linewidth]{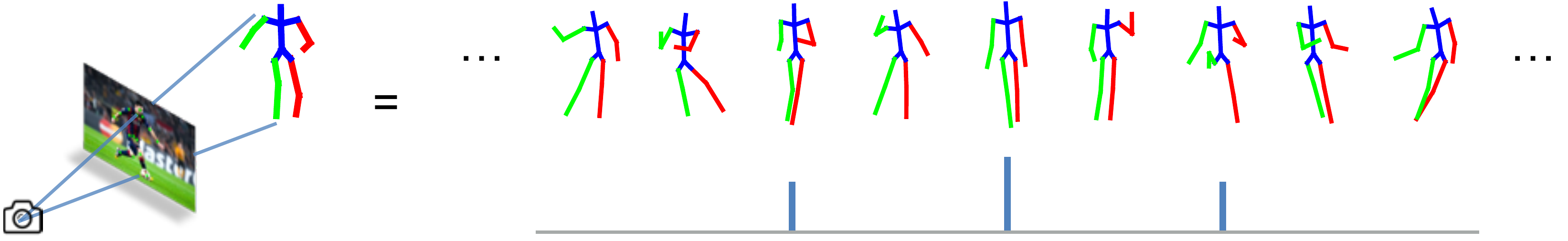}\\
  \caption{Illustration of the problem studied in this paper. The unknown 3D model is defined by a set of landmarks and assumed to be a linear combination of some predefined basis shapes with sparse coefficients. Given the 2D correspondences of the landmarks in a single image, the computational problem is to simultaneously estimate the coefficients of the sparse representation as well as the viewpoint of the camera.  }\label{fig:illustration}
\end{figure*}

Recognizing 3D objects from 2D images is a central problem in computer vision. Past years have witnessed an emerging trend towards analyzing 3D geometry of objects including shapes and poses instead of merely providing bounding boxes \cite{xiang2012estimating,aubry2014seeing}. The 3D geometric reasoning can not only provide richer information about the scene for subsequent high-level tasks such as scene understanding, augmented reality and human computer interaction, but also improve 2D recognition performances \cite{fidler20123d,simo2013joint}. 3D reconstruction has been a well studied problem and there have been many practically applicable techniques such as structure from motion, multi-view stereo and depth sensors, but these techniques are limited in some scenarios. For example, a large number of acquisitions from multiple views are often required in order to obtain a complete 3D model, which is not preferred in some real-time applications; the depth sensors in general cannot work outdoor and have a limited sensing range; and reconstructing a dynamic scene is still an open problem. In this paper, we aim to investigate the possibility of estimating the 3D shape of an object from a single 2D image, which is complementary to the aforementioned techniques and may potentially address the above issues.

Estimating the 3D geometry of an object from a single view is an ill-posed problem. But it is a possible task for a human observer, since human can leverage visual memory of object shapes. Inspired by this idea, much effort has been made towards 3D model-based analysis leveraging the increasing availability of online 3D model databases, such as the Google 3D warehouse that includes millions of CAD models of various objects and many publicly available shape scans and motion capture data that model the 3D shape and pose of human.

To address intra-class variability and nonrigid deformation and avoid exhaustively enumerating all possibilities, many previous works, e.g. \cite{hejrati2012analyzing,zia2013detailed}, adopted a 3D deformable model originated from the ``active shape model" \cite{cootes1995active} to represent shapes, where each shape is defined by a set of ordered landmarks and the one to be estimated is assumed as a linear combination of some basis shapes usually obtained from principal component analysis (PCA). For human poses, the sparse representation was proposed to handle highly articulated deformation of human bodies which cannot be captured by PCA \cite{ramakrishna2012reconstructing,wang2014robust}. In model inference, the 3D deformable model is matched to the landmarks annotated or detected in images, and the problem is reduced to a 3D-to-2D shape fitting problem where the shape (weights of the linear combination) and the viewpoint (camera extrinsic parameters) have to be estimated simultaneously. \refFig{fig:illustration} gives an illustration of the problem.

While this approach has achieved promising results in various applications, the model inference is still a challenging problem, since the subproblems of shape and viewpoint estimation are coupled: the viewpoint needs to be known in order to fit the 3D model to 2D, and inversely, the 3D model needs to be known in order to estimate the viewpoint. The joint estimation of shape and viewpoint results in a nonconvex optimization problem, and the orthogonality constraint on the camera rotation makes the problem even more complicated. Previous methods often rely on an alternating scheme to alternately update the shape and viewpoint parameters, which has no guarantee for global convergence and is sensitive to initialization. As mentioned in many previous works, e.g. \cite{ramakrishna2012reconstructing,hejrati2012analyzing}, most of the failed cases were attributed to bad initialization. Some heuristics have been proposed to address this issue, such as initializing multiple times \cite{wang2014robust} or using a viewpoint-aware detector \cite{zia2013detailed}. But there is still no guarantee for global optimality.

In this paper, we propose a convex relaxation approach to addressing the aforementioned issue. We use an augmented shape-space model, in which a shape is represented as a linear combination of rotatable basis shapes. This model gives a linear representation of both intrinsic shape deformation and extrinsic viewpoint changes. Next, we use the convex relaxation of the orthogonality constraint to convert the entire problem into a spectral-norm regularized least squares problem, which is a convex program. Then, we develop an efficient algorithm to solve the proposed convex program based on the alternating direction method of multipliers (ADMM). To achieve fast computation, we derive a closed-form solution to compute the proximal operator of the spectral norm. Furthermore, we extend the proposed model to handle outliers in the input 2D correspondences. This work extends its earlier version \cite{zhou20143d} with the extensions including postprocessing for rotation synchronization, outlier modeling, more experiments and real examples with learned detectors.

The remainder of this paper is organized as follows. We summarize related work in \refSec{sec:related}, formulate the problem in \refSec{sec:formulation}, introduce the proposed method and technical details in \refSec{sec:methods}, report empirical results in \refSec{sec:experiments}, and finally conclude the paper in \refSec{sec:discussion}. The code is available at \url{http://cis.upenn.edu/~xiaowz/shapeconvex.html}.

\section{Related work}\label{sec:related}

The most related work includes the ones that fit a 3D deformable model to the features in a 2D image. A popular application is human face modeling for various tasks such as recognition \cite{blanz2003face}, feature tracking \cite{gu20063d} and facial animation \cite{cao20133d}. Recently, there has been an increased number of works on 3D object modeling. For example, Hejrati and Ramanan \cite{hejrati2012analyzing} used the deformable model for 3D car modeling. They produced 2D landmarks by a variant of deformable part models \cite{felzenszwalb2008discriminatively} and then lifted the 2D model to 3D. Lin et al. \cite{lin2014jointly} proposed a method for joint 3D model fitting and fine-grained classification for cars. Zia \cite{zia2013detailed} et al. developed a probabilistic framework to simultaneously localize 2D landmarks and recover 3D object models.

While the conventional active shape model performs well to describe the shape variability of rigid objects, it can hardly handle the structural variability of nonrigid objects such as human bodies. To address this issue, Ramakrishna et al. \cite{ramakrishna2012reconstructing} proposed a sparse representation based approach to reconstructing 3D human pose from annotated joints in a still image. Wang et al. \cite{wang2014robust} adopted a 2D human pose detector to automatically locate the joints and used a robust estimator to handle inaccurate joint locations. Fan et al. \cite{fan2014pose} proposed to improve the performance of \cite{ramakrishna2012reconstructing} by enforcing locality when building the pose dictionary. Zhou and De La Torre \cite{zhou2014sptio} formulated human pose estimation as a matching problem, where a learned spatio-temporal pose model was matched to point trajectories extracted from a video. Akhter and Black \cite{akhter2015pose} integrated a joint-angle constraint into the sparse representation to reduce the possibility of invalid reconstruction.

Recently, there was growing interest in reconstructing category-specific object models from a collection of single images of different instances with the same category \cite{cashman2013shape,vicente2014reconstructing,carreira2015virtual,kar2015category}. The deformable shape model was adopted and fitted to either visual hulls in the pre-segmented images or landmarks annotated in the images, and at the same time the basis shapes were also learned from images during the model inference. The problem studied in this paper can be regarded as a subproblem in this process where the basis shapes are given.

A common component or an intermediate step in the works mentioned above is to fit a 3D deformable model to 2D correspondences. As we mentioned in the introduction, these works relied on nonconvex optimization, which may be sensitive to initialization. The convex formulation proposed in this paper can potentially serve as a building block or provide a good initialization to improve the performance of the existing methods.

Another line of work tried to solve single-image reconstruction by finding the nearest neighbor in the shape collection followed by a refinement step \cite{su2014estimating,huang2015single}. The initial shape and viewpoint were found by enumerating all instances and viewpoints and comparing the test image with the rendered one. The initial estimate was refined by optimizing both the viewpoint and nonrigid deformation according to image contours. This approach produces very detailed reconstruction and is applicable to a wide range of object categories, but it is computationally expensive and requires accurate image segmentation.

There are also many alternative approaches for 3D human pose recovery from single images such as using a known articulated skeleton \cite{taylor2000reconstruction,guan2009estimating,leonardos2016articulated}, probabilistic graphical models \cite{sigal2006predicting,andriluka2010monocular}, explicit regression \cite{elgammal2004inferring,agarwal2006recovering}, to name a few. But these approaches are customized for human pose modeling and not straightforwardly generalizable for other objects.

Our work is also closely related to nonrigid structure from motion (NRSfM), where a deformable shape is recovered from multi-frame 2D-2D correspondences. The low-rank shape-space model has been frequently used in NRSfM, but the basis shapes are unknown. The joint estimation of shape/pose variables and basis shapes was typically solved via matrix factorization followed by metric rectification \cite{bregler2000recovering,xiao2006closed}. In some recent works, iterative algorithms were employed for better precision \cite{paladini2012optimal,del2012bilinear} or sequential processing \cite{agudo2014good}, and the problem studied in this paper is analogous to the step of fixing basis shapes and updating the remaining variables in those works.

\section{Problem statement}\label{sec:formulation}

The problem studied in this paper can be described by the following linear system:
\begin{align}\label{eq:basic}
\bfW = \Pi\bfS,
\end{align}
where $\bfS\in\RR{3}{p}$ denotes the unknown 3D shape, which is represented by 3D locations of $p$ points. $\bfW\in\RR{2}{p}$ denotes their projections in a 2D image. $\Pi$ is the camera calibration matrix. To simplify the problem, the weak-perspective camera model is usually used, which is a good approximation when the object depth is much smaller than the distance from the camera. With this assumption, the calibration matrix has the following simple form:
\begin{align}\label{eq:calibration}
\Pi = \begin{bmatrix}
            s & 0 & 0 \\
            0 & s & 0 \\
            \end{bmatrix},
\end{align}
where $s$ is a scalar depending on the focal length and the distance to the object.

There are always more variables than equations in \refEq{eq:basic}. To make the problem well-posed, a widely-used assumption is that the unknown shape can be represented as a linear combination of predefined basis shapes, which is originated from the active shape model \cite{cootes1995active}:
\begin{align}\label{eq:shape-model-original}
    \bfS = \sum_{i=1}^{k} c_i\bfB_i,
\end{align}
where $\bfB_i\in\RR{3}{p}$ for $i\in[1,k]$ represents a basis shape and $c_i$ its weight in the combination. Thus, the reconstruction problem is reduced to a problem of estimating several coefficients by fitting the model \refEq{eq:shape-model-original} to the landmarks in an image, which greatly reduces the number of unknowns.

Since the basis shapes are predefined, the relative rotation and translation between the camera frame and the coordinates that define the basis shapes need to be taken into account, and the 3D-2D projection is depicted by:
\begin{align}\label{eq:2d3dcorresp}
\bfW = \Pi\left(\bfR\sum_{i=1}^{k} c_i\bfB_i + \bfT\bfone^T\right),
\end{align}
where $\bfR\in\RR{3}{3}$ and $\bfT\in\R{3}$ correspond to the rotation matrix and the translation vector, respectively. $\bfR$ should be in the special orthogonal group
\begin{align}
SO(3) = \{\bfR\in\RR{3}{3}|\bfR^T\bfR=\bfI_3,\det{\bfR}=1\}.
\end{align}

Equation \refEq{eq:2d3dcorresp} can be further simplified as
\begin{align}\label{eq:bilinear}
\bfW = \bar{\bfR}\sum_{i=1}^{k} c_i\bfB_i,
\end{align}
where $\bar{\bfR}\in\RR{2}{3}$ denotes the first two rows of the rotation matrix, and the translation $\bfT$ has been eliminated by centralizing the data, i.e. subtracting each row of $\bfW$ and $\bfB$ by its mean. Note that the scalar $s$ in the calibration matrix has been absorbed into $c_1,\cdots,c_k$.

In the active shape model \cite{cootes1995active}, the principal components of training samples are used as the basis shapes, which assumes that all shapes lie in a low-dimensional linear space. In more recent work, e.g. \cite{ramakrishna2012reconstructing,zhang2011sparse,zhu2010model,zhu2014complex}, it has been shown that the low-dimensional linear space is insufficient to model complex shape variation, e.g., human poses, and a promising approach is to use an over-complete dictionary and represent an unknown shape as a sparse combination of atoms in the dictionary. Such a sparse representation implicitly encodes the assumption that the unknown shape should lie in a union of subspaces that approximates a nonlinear shape manifold. The representabilities of PCA and sparse representation for 3D human pose modeling will be empirically compared in \refSec{sec:human}.

Based on the sparse representation of shapes, the following type of optimization problem is often considered to estimate an unknown shape:
\begin{align}\label{eq:originalnoisy}
    \min_{\bfc,\bar{\bfR}} ~~ & \half \left\| \bfW - \bar{\bfR}\sum_{i=1}^{k} c_i\bfB_i \right\|_F^2 + \alpha \|\bfc\|_1, \nonumber \\
    \st ~~ & \bar{\bfR}\bar{\bfR}^T = \bfI_2,
\end{align}
where $\bfc=[c_1,\cdots,c_k]^T$ and $\|\bfc\|_1$ represents the $\ell_1$ norm of $\bfc$, which is the convex surrogate of the cardinality. $\|\cdot\|_F$ denotes the Frobenius norm of a matrix. The terms in the loss function of \refEq{eq:originalnoisy} correspond to the reprojection error and the sparsity of representation, respectively.

The optimization in \refEq{eq:originalnoisy} is nonconvex and there is an orthogonality constraint. A commonly-used strategy to solve the optimization is the alternating minimization scheme, in which two steps are alternated: (1) fixing $\bar{\bfR}$ and updating $\bfc$ by any $\ell_1$ minimization solver, e.g., the methods summarized in \cite{bach2012optimization}; (2) fixing $\bfc$ and updating $\bar{\bfR}$ using certain rotation representations such as the quaternions, the exponential map or a manifold representation. In some previous works \cite{ramakrishna2012reconstructing,akhter2015pose}, $\bar{\bfR}$ is updated by the singular value decomposition (SVD), and it is worth noting that this method can only yield an approximate solution since $\bar{\bfR}$ is not a square matrix and $\bar{\bfR}^T\bar{\bfR}\neq \bfI$. Generally no closed-form solution exists in this case \cite{edelman1998geometry}.

The alternating algorithm is summarized in \refAlg{alg:altern}. Due to the nonconvexity of \refEq{eq:originalnoisy}, \refAlg{alg:altern} may get stuck at local minima when initialization is far away from the true solution.

\begin{algorithm}\label{alg:altern}
\LinesNumbered
\caption{Alternating minimization to solve \refEq{eq:originalnoisy}.}
\KwIn{$\bfW$}
\KwOut{$\bfc$ and $\bar\bfR$.}
\vspace{0.5em}
initialize $\bfc$ and $\bar\bfR$\;
\While{not converged}{
update $\bfc$ using any $\ell_1$ minimization solver\;
update $\bar\bfR$ using SVD or any local optimization solver over $SO(3)$\;
}
\vspace{0.5em}
\end{algorithm}

\section{Proposed methods}\label{sec:methods}

\subsection{Convex relaxation}\label{sec:relaxation}

We propose to use the following shape-space model:
\begin{align}\label{eq:shape-model-relaxed}
\bfS = \sum_{i=1}^{k} c_i\bfR_i\bfB_i,
\end{align}
in which there is a rotation for each basis shape. The model in \refEq{eq:shape-model-relaxed} implicitly accounts for the viewpoint variability and the projected 2D model is
\begin{align}\label{eq:new2dmodel}
\bfW = \Pi\sum_{i=1}^{k} c_i\bfR_i\bfB_i = \sum_{i=1}^{k} \bfM_i\bfB_i,
\end{align}
where $\bfM_i\in\RR{2}{3}$ is the product of $c_i$ and the first two rows of $\bfR_i$, which satisfies
\begin{align}\label{eq:orthogonality}
    \bfM_i\bfM_i^T = c_i^2\bfI_2.
\end{align}

The motivation of using the model in \refEq{eq:new2dmodel} is to achieve a linear representation of shape variability in 2D, such that we can get rid of the bilinear form in \refEq{eq:bilinear}, which is a necessary step towards a convex formulation.

The model in \refEq{eq:new2dmodel} is equivalent to the affine-shape model in literature \cite{blake2000active,xiao2004real}, which uses an augmented linear space to represent the shape variation in 2D caused by both intrinsic shape deformation and extrinsic viewpoint changes. This representation also appears in most NRSfM literature, e.g. \cite{bregler2000recovering,paladini2012optimal}. As mentioned in \cite{xiao2004real}, the augmented linear space can represent any 2D shape produced by the 3D shape model projected into the image plane, but the increased degree of freedom may result in invalid shapes. In this work, we try to reduce the possibility of invalid cases by enforcing the orthogonality constraint on $\bfM_i$s and the sparsity constraint on the number of activated basis shapes. We will show that these constraints can be conveniently imposed by minimizing a convex regularizer.

Next, replacing the orthogonality constraint in \refEq{eq:orthogonality} by its convex counterpart is considered. The following lemma has been proven in literature \cite[Section 3.4]{journee2010generalized}:

\begin{lemma}\label{lemma1}
The convex hull of the Stiefel manifold $\mathcal{Q}=\left\{\bfX\in\RR{m}{n} | ~\bfX^T\bfX = \bfI_n\right\}$ equals the unit spectral-norm ball $\conv{\mathcal{Q}}=\left\{\bfX\in\RR{m}{n} | ~~\|\bfX\|_2 \leq 1 \right\}$. $\|\bfX\|_2$ denotes the spectral norm (a.k.a. the induced 2-norm) of a matrix $\bfX$, which is defined as the largest singular value of $\bfX$.
\end{lemma}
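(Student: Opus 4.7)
The plan is to prove the two set inclusions separately. The inclusion $\conv{\mathcal{Q}}\subseteq\{\bfX:\|\bfX\|_2\leq 1\}$ is the easy direction: every $\bfX\in\mathcal{Q}$ has all singular values equal to $1$ and hence $\|\bfX\|_2=1$, so $\mathcal{Q}$ is contained in the unit spectral-norm ball; since the spectral norm is a norm, the unit ball is convex, and therefore contains $\conv{\mathcal{Q}}$. (I would assume $m\geq n$ from the outset, since otherwise $\mathcal{Q}$ is empty and the claim is vacuous.)

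The nontrivial direction is to show that every $\bfX$ with $\|\bfX\|_2\leq 1$ can be written as a convex combination of elements of $\mathcal{Q}$. My approach would be to use the thin SVD $\bfX=\bfU\bfSigma\bfV^T$, where $\bfU\in\RR{m}{n}$ has orthonormal columns, $\bfV\in\RR{n}{n}$ is orthogonal, and $\bfSigma=\diag(\sigma_1,\ldots,\sigma_n)$ with $\sigma_i\in[0,1]$. The core reduction is then to show that the diagonal matrix $\bfSigma$ lies in the convex hull of the $2^n$ signed-identity matrices $\{\diag(\epsilon):\epsilon\in\{-1,+1\}^n\}$.

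For this reduction I would use the explicit product-of-marginals weights
\begin{align*}
\lambda_\epsilon \;=\; \prod_{i=1}^{n}\frac{1+\epsilon_i\sigma_i}{2},
\end{align*}
which are nonnegative because $\sigma_i\in[0,1]$, sum to $\prod_i\!\bigl(\tfrac{1+\sigma_i}{2}+\tfrac{1-\sigma_i}{2}\bigr)=1$, and satisfy $\sum_\epsilon\lambda_\epsilon\,\epsilon_j=\sigma_j$ by a one-line computation marginalizing out the coordinates $i\neq j$. Multiplying on the left by $\bfU$ and on the right by $\bfV^T$ preserves convex combinations, so
\begin{align*}
\bfX \;=\; \sum_{\epsilon\in\{-1,+1\}^n}\lambda_\epsilon\,\bfU\diag(\epsilon)\bfV^T,
\end{align*}
and it remains only to verify that each matrix $\bfY_\epsilon:=\bfU\diag(\epsilon)\bfV^T$ lies in $\mathcal{Q}$. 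This is immediate from
\begin{align*}
\bfY_\epsilon^T\bfY_\epsilon \;=\; \bfV\diag(\epsilon)\bfU^T\bfU\diag(\epsilon)\bfV^T \;=\; \bfV\diag(\epsilon)^2\bfV^T \;=\; \bfV\bfV^T \;=\; \bfI_n,
\end{align*}
using $\bfU^T\bfU=\bfI_n$ and $\epsilon_i^2=1$.

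The main obstacle is purely the ``hard'' inclusion, and within it the step of writing $\bfSigma$ as a convex combination of sign-diagonal matrices; the product-of-marginals weights above handle it cleanly and avoid any induction or appeal to Birkhoff-type results. The rest of the argument (SVD, convexity of the spectral-norm ball, and the algebraic check that $\bfY_\epsilon\in\mathcal{Q}$) is routine.
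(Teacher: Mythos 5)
Your proof is correct, but note that the paper does not actually prove this lemma: it is imported verbatim from the literature (Journ\'ee et al., cited at the point of the statement), and only the scaled version (Proposition~1) is proved in the text, by rescaling. So what you have produced is a self-contained argument for a fact the paper takes as given. The easy inclusion and the reduction via the SVD are exactly what one would expect; the substantive content is your decomposition of $\diag(\sigma_1,\dots,\sigma_n)$ with $\sigma_i\in[0,1]$ as a convex combination of the $2^n$ sign-diagonal matrices using the product-of-marginals weights $\lambda_\epsilon=\prod_i\tfrac{1+\epsilon_i\sigma_i}{2}$, which is a clean, fully explicit construction. The standard route in the cited literature instead identifies the Stiefel manifold as the set of extreme points of the compact convex unit spectral-norm ball and invokes Krein--Milman (or, equivalently, argues via the duality between the spectral and nuclear norms). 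Your version buys elementarity and an explicit finite convex combination at the cost of exponentially many terms; the extreme-point argument is shorter if one is willing to quote Krein--Milman but gives no explicit weights. Your caveat that $m\geq n$ is needed (otherwise $\mathcal{Q}=\emptyset$ and the statement is vacuous or false as literally written) is correct and worth keeping, since the paper applies the result to $2\times 3$ matrices only after implicitly transposing the orthogonality condition. All the individual verifications — nonnegativity and normalization of the $\lambda_\epsilon$, the marginalization identity $\sum_\epsilon\lambda_\epsilon\epsilon_j=\sigma_j$, and $\bfY_\epsilon^T\bfY_\epsilon=\bfI_n$ — check out.
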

Based on \refLemma{lemma1}, we have the following proposition:
\begin{proposition}
Given a scalar $s$, the convex hull of $\mathcal{S}=\left\{\bfY\in\RR{m}{n} | ~\bfY^T\bfY = s^2\bfI_n\right\}$ equals the spectral-norm ball with a radius of $|s|$: $\conv{\mathcal{S}}=\left\{\bfY\in\RR{m}{n} | ~~\|\bfY\|_2 \leq |s| \right\}$.
\end{proposition}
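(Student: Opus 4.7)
The plan is to reduce the proposition directly to \refLemma{lemma1} by a simple scaling argument, since $\mathcal{S}$ is nothing more than a dilation of the Stiefel manifold $\mathcal{Q}$ by the factor $|s|$.

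First, I would dispose of the degenerate case $s = 0$: then $\mathcal{S} = \{\bfzero\}$ and the spectral-norm ball of radius $0$ also equals $\{\bfzero\}$, so there is nothing to show. Assume henceforth $s \neq 0$.

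Next, I would establish the set identity $\mathcal{S} = |s|\,\mathcal{Q}$. This is immediate: for any $\bfY \in \RR{m}{n}$, setting $\bfX = \bfY/|s|$ gives $\bfX^T\bfX = \bfY^T\bfY/s^2$, so $\bfY^T\bfY = s^2\bfI_n$ if and only if $\bfX^T\bfX = \bfI_n$, i.e.\ $\bfY \in \mathcal{S}$ if and only if $\bfY \in |s|\,\mathcal{Q}$.

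Then I would use the elementary fact that the convex-hull operation commutes with scalar multiplication, that is, $\conv{\lambda\mathcal{A}} = \lambda \conv{\mathcal{A}}$ for any scalar $\lambda$ and any set $\mathcal{A}$, which follows directly from the definition of convex combinations. Combined with \refLemma{lemma1}, this yields
\begin{align*}
\conv{\mathcal{S}} = \conv{|s|\,\mathcal{Q}} = |s|\,\conv{\mathcal{Q}} = \left\{|s|\bfX : \|\bfX\|_2 \leq 1\right\}.
\end{align*}
Finally, using the positive homogeneity of the spectral norm, $\|\,|s|\bfX\|_2 = |s|\,\|\bfX\|_2$, the last set equals $\{\bfY \in \RR{m}{n} : \|\bfY\|_2 \leq |s|\}$, which is the desired spectral-norm ball of radius $|s|$.

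There is no real obstacle here: the proof is a one-line reduction to \refLemma{lemma1}. The only minor care is in making the scaling bijection precise and in handling $s = 0$ separately so that $|s|\,\mathcal{Q}$ is well defined as a dilation.
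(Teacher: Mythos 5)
Your proof is correct and follows essentially the same route as the paper's: both write $\mathcal{S}$ as the dilation $|s|\,\mathcal{Q}$ of the Stiefel manifold, use the fact that the convex hull commutes with scalar multiplication, and then invoke \refLemma{lemma1}. Your explicit treatment of the case $s=0$ and of the homogeneity of the spectral norm is slightly more careful than the paper's, but adds nothing substantively different.
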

\begin{proof}
Since $\mathcal{S}=\left\{\bfY ~ | ~ \bfY=|s|\bfX, \bfX\in\mathcal{Q} \right\}$, we have
\begin{align}
\conv{\mathcal{S}} &= \left\{\sum_{i=1}^{k}\theta_i\bfY_i ~~ | ~~\bfY_i\in\mathcal{S}, \theta_i\geq 0, \sum_{i=1}^{k}\theta=1 \right\} \nonumber \\
&= \left\{\sum_{i=1}^{k}\theta_i|s|\bfX_i ~~ | ~~\bfX_i\in\mathcal{Q}, \theta_i\geq 0, \sum_{i=1}^{k}\theta=1 \right\} \nonumber \\
&= |s|\cdot\conv{\mathcal{Q}}. \nonumber
\end{align}
\end{proof}

Consequently, the tightest convex relaxation to the constraint in \refEq{eq:orthogonality} is given by $\|\bfM_i\|_2 \leq |c_i|$.

Finally, with the modified shape model, the relaxed orthogonality constraint and the assumption of sparse representation, the following optimization is proposed for shape recovery under the noiseless case:
\begin{align}\label{eq:originalnoiseless}
    \min_{c_1,\cdots,c_k,\bfM_1,\cdots,\bfM_k}~ & \sum_{i=1}^{k}|c_i|, \nonumber \\
    \st ~~~~~~~~~~ & \bfW = \sum_{i=1}^{k} \bfM_i\bfB_i, \nonumber \\
    & \|\bfM_i\|_2 \leq |c_i|, ~ \forall i\in[1,k].
\end{align}
In \refEq{eq:originalnoiseless}, reducing $|c_i|$ can decrease the objective and the only constraint on $|c_i|$ is that $\|\bfM_i\|_2 \leq |c_i|$. Therefore, $|c_i|$ is equal to $\|\bfM_i\|_2$ when an optimum is attained. Consequently, \refEq{eq:originalnoiseless} is equivalent to the following problem:
\begin{align}\label{eq:finalnoiseless}
    \min_{\bfM_1,\cdots,\bfM_k}~ & \sum_{i=1}^{k}\|\bfM_i\|_2, \nonumber \\
    \st ~~~~ & \bfW = \sum_{i=1}^{k} \bfM_i\bfB_i.
\end{align}
The formulation in \refEq{eq:finalnoiseless} is a linear inverse problem, where a set of orthogonal matrices\footnote{For simplicity, we call a matrix $\bfX$ as orthogonal matrix if it satisfies $\bfX^T\bfX=s\bfI$ or $\bfX\bfX^T=s\bfI$ where $s$ is a scalar.} are estimated by minimizing their spectral norms. Interestingly, the conditions for exact recovery using such a convex program has been theoretically analyzed in \cite{chandrasekaran2012convex}. Numerical results will be presented in \refSec{sec:simulation} to demonstrate the exact recovery property.

To consider noise in real applications, the following formulation is proposed:
\begin{align}\label{eq:finalnoisy}
    \min_{\bfM_1,\cdots,\bfM_k}~ & \half \left\| \bfW - \sum_{i=1}^{k} \bfM_i\bfB_i \right\|_{F}^2 + \alpha \sum_{i=1}^{k}\|\bfM_i\|_2.
\end{align}
The problem \refEq{eq:finalnoisy} is the final formulation, which is a penalized least-squares problem. We have following remarks:
\begin{itemize}
\item[1.] The problem in \refEq{eq:finalnoisy} is convex programming, which can be solved globally. We will provide an efficient algorithm to solve it in \refSec{sec:alg}.
\item[2.] Notice that $\|\cdot\|_2$ in the above formulations denotes the spectral norm of a matrix. As we will show in \refSec{sec:proximal}, minimizing the spectral norm of a matrix is equivalent to minimizing the $\ell_{\infty}$-norm of the vector of singular values, which will simultaneously shrink the norm of the matrix towards zero and enforce its singular values to be equal. Therefore, by spectral-norm minimization, we can not only minimize the number of activated basis shapes but also enforce each transformation matrix $\bfM_i$ to be orthogonal (an orthogonal matrix has equal singular values).
\item[3.] For the cases with missing or invisible landmarks, the model parameters can be estimated by minimizing reprojection errors of observed landmarks, i.e., including a binary weight matrix in the first term of \refEq{eq:finalnoisy}. The unobserved landmarks can be hallucinated from the reconstructed model as their locations are known on the basis shapes.
\end{itemize}

\subsection{Proximal operator of the spectral norm}\label{sec:proximal}

Before providing the specific algorithm to solve \refEq{eq:finalnoisy}, we first prove the following proposition, which will serve as an important building block in our algorithm.
\begin{proposition}\label{prop:prox2norm}
The solution to the following problem
\begin{align}\label{eq:prox-2norm}
\min_{\bfX} ~ \half \|\bfA-\bfX\|_F^2 + \lambda \|\bfX\|_2
\end{align}
is given by $\bfX^*=\mathcal{D}_{\lambda}(\bfA)$, where
\begin{align}\label{eq:proximal-operator}
\mathcal{D}_{\lambda}(\bfA) &= \bfU_A~\diag\left[\bfsigma_A - \lambda\proj_{\ell_1}(\bfsigma_A/\lambda)\right]~\bfV_A^T,
\end{align}
$\bfU_A$, $\bfV_A$ and $\bfsigma_A$ denote the left singular vectors, right singular vectors and the singular values of $\bfA$, respectively. $\proj_{\ell_1}$ is the projection of a vector to the unit $\ell_1$-norm ball.
\end{proposition}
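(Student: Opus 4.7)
The plan is to exploit the fact that the spectral norm is a unitarily invariant function of the singular values, reducing the matrix problem to a vector problem on the singular values of $\bfA$, and then to invoke the Moreau decomposition connecting the proximal operator of the $\ell_\infty$ norm to a projection onto the dual ball (the $\ell_1$ ball).

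First I would argue that the optimal $\bfX^*$ admits a singular value decomposition of the form $\bfX^* = \bfU_A \diag(\bfx^*) \bfV_A^T$ for some nonnegative vector $\bfx^*$. This uses Von Neumann's trace inequality: since $\|\bfA - \bfX\|_F^2 = \|\bfA\|_F^2 - 2\trace{\bfA^T\bfX} + \|\bfX\|_F^2$ and $\|\bfX\|_F$, $\|\bfX\|_2$ depend only on the singular values of $\bfX$, the only coupling between $\bfA$ and $\bfX$ is through $\trace{\bfA^T\bfX}$, which is maximized over all matrices with a prescribed singular spectrum when $\bfX$ shares the singular vectors of $\bfA$ and its singular values are aligned in the same order. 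Substituting this form reduces the problem to the vector program
\begin{align*}
\min_{\bfx \ge 0} ~ \tfrac{1}{2}\|\bfsigma_A - \bfx\|_2^2 + \lambda \|\bfx\|_\infty,
\end{align*}
since $\|\bfX\|_2 = \|\bfx\|_\infty$.

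Next I would recognize this as evaluating the proximal operator of $\lambda \|\cdot\|_\infty$ at $\bfsigma_A$. The key identity is the Moreau decomposition for a norm $\|\cdot\|$ with dual norm $\|\cdot\|_*$:
\begin{align*}
\prox_{\lambda \|\cdot\|}(\bfa) = \bfa - \lambda \proj_{B_*}(\bfa/\lambda),
\end{align*}
where $B_*$ denotes the unit ball of $\|\cdot\|_*$. Applied to the $\ell_\infty$ norm, whose dual is the $\ell_1$ norm, this gives $\prox_{\lambda \|\cdot\|_\infty}(\bfsigma_A) = \bfsigma_A - \lambda \proj_{\ell_1}(\bfsigma_A/\lambda)$. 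Reassembling the SVD yields the claimed formula for $\mathcal{D}_\lambda(\bfA)$.

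The main obstacle is the first step: carefully justifying that we may restrict attention to $\bfX$ sharing the singular vectors of $\bfA$ with singular values in matching order, and in particular ensuring that the resulting nonnegative entries of $\bfx^*$ correctly encode the singular values of $\bfX^*$ (so the expression inside $\diag[\cdot]$ in \refEq{eq:proximal-operator} is indeed nonnegative, which requires $\lambda \proj_{\ell_1}(\bfsigma_A/\lambda)$ to be entrywise bounded by $\bfsigma_A$). I would verify this last point by noting that projection onto the $\ell_1$ ball of a nonnegative, nonincreasing vector $\bfsigma_A/\lambda$ returns a nonnegative vector that is entrywise dominated by $\bfsigma_A/\lambda$, a standard property of the simplex/$\ell_1$-ball projection. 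Everything else is a routine invocation of the Moreau identity and unitary invariance.
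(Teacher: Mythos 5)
Your proof is correct and follows the same overall route as the paper: reduce the matrix problem to a vector problem on the singular values, then apply the Moreau decomposition with the $\ell_1$/$\ell_\infty$ duality to get $\bfsigma_A - \lambda\proj_{\ell_1}(\bfsigma_A/\lambda)$. The one substantive difference is in how the reduction is justified: the paper simply cites the proximal-operator formula for spectral functions from Parikh and Boyd, whereas you derive it from first principles via Von Neumann's trace inequality (observing that $\trace{\bfA^T\bfX}$ is the only term coupling $\bfX$ to $\bfA$, and that it is maximized over matrices with a fixed spectrum by aligning singular vectors and ordering). Your version is more self-contained, and it also makes explicit a detail the cited lemma absorbs silently, namely that the shrunk vector $\bfsigma_A - \lambda\proj_{\ell_1}(\bfsigma_A/\lambda)$ is entrywise nonnegative (and in fact nonincreasing when $\bfsigma_A$ is), so that it genuinely is the singular-value vector of the claimed minimizer; the paper's citation-based proof buys brevity at the cost of leaving these points implicit.
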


\begin{proof}
The problem in \refEq{eq:prox-2norm} is a proximal problem \cite{parikh2013proximal}. The proximal problem associated with a function $F$ is defined as
\begin{align}
\prox_{\lambda F}(\bfA) = \arg\min_{\bfX} \half \|\bfA-\bfX\|_F^2 + \lambda F(\bfX),
\end{align}
with the solution denoted by $\prox_{\lambda F}(\bfA)$ and named the proximal operator of $F$.

For the problem \refEq{eq:prox-2norm}, $F(\bfX)=\|\bfX\|_2=\|\bfsigma_X\|_{\infty}$, where $\|\cdot\|_{\infty}$ means the $\ell_{\infty}$ norm. It says that $F$ is a spectral function operated on singular values of a matrix. Based on the property of spectral functions \cite[Section 6.7.2]{parikh2013proximal}, we have
\begin{align}
\prox_{\lambda F}(\bfA) = \bfU_A~\diag\left[\prox_{\lambda f}(\bfsigma_A)\right]~\bfV_A^T,
\end{align}
where $f$ is the $\ell_{\infty}$-norm. The proximal operator of the $\ell_{\infty}$-norm can be computed by Moreau decomposition \cite[Section 6.5]{parikh2013proximal}:
\begin{align}
\prox_{\lambda f}(\bfsigma_A) = \bfsigma_A - \lambda\proj_{\ell_1}(\bfsigma_A/\lambda),
\end{align}
given that the $\ell_1$-norm is the dual norm of the $\ell_{\infty}$-norm.
\end{proof}

The solution for the case with two singular values is illustrated in \refFig{fig:proximal}, where $[\Delta_1,\Delta_2]^T$ corresponds to $\bfsigma_A - \lambda\proj_{\ell_1}(\bfsigma_A/\lambda)$ in \refEq{eq:proximal-operator}. From the illustration we have two observations: (1) If the projection is on the edge of the $\ell_1$-norm ball, then $\Delta_1=\Delta_2$ and $\bfX^*$ turns out to be orthogonal; (2) If $\bfsigma_A$ is small and lies inside the $\ell_1$-norm ball, then the projection will be itself, $\Delta_1=\Delta_2=0$, and $\bfX^*$ turns out to be all-zero. These two observations illustrate the effects of spectral-norm regularization in \refEq{eq:finalnoisy}: (1) Enforcing $\bfM_i$s to be orthogonal; (2) Pruning small $\bfM_i$s to achieve a sparse representation.

\begin{figure}
  \centering
  \includegraphics[width=0.7\linewidth]{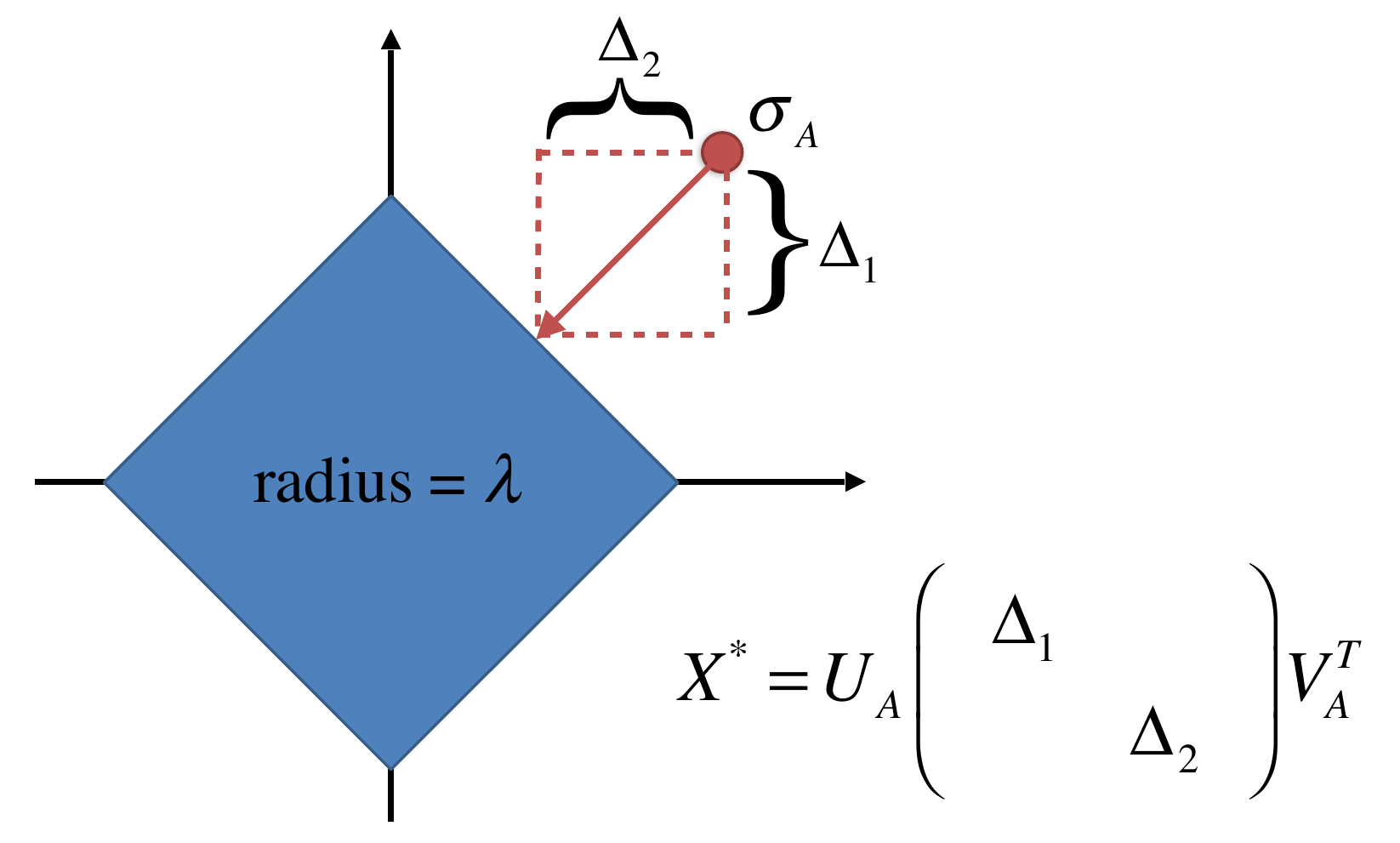}\\
  \caption{Illustration of the proximal operator of the spectral norm.}\label{fig:proximal}
\end{figure}

\subsection{Optimization}\label{sec:alg}

The algorithm to solve \refEq{eq:finalnoisy} is presented here. The noiseless case \refEq{eq:finalnoiseless} can be solved similarly.

The proposed algorithm is based on ADMM \cite{boyd2010distributed} and the proximal operator of the spectral norm derived in \refSec{sec:proximal}. An auxiliary variable $\bfZ$ is introduced and \refEq{eq:finalnoisy} is rewritten as
\begin{align}\label{eq:admm-noisy}
    \min_{\widetilde{\bfM},{\bfZ}}~ & \half \left\| \bfW - \bfZ\widetilde{\bfB}  \right\|_{F}^2 + \alpha \sum_{i=1}^{k}\|\bfM_i\|_2, \nonumber \\
    \st ~& \widetilde{\bfM} = \bfZ,
\end{align}
where
\begin{align}
\widetilde{\bfM}=\begin{bmatrix}\bfM_1&\cdots&\bfM_k\end{bmatrix}, \hspace{1em}
\widetilde{\bfB}=\begin{bmatrix}\bfB_1 \\ \vdots \\ \bfB_k\end{bmatrix}.
\end{align}

The augmented Lagrangian of \refEq{eq:admm-noisy} is
\begin{align}
    \mathcal{L}_{\mu}\left(\widetilde{\bfM},\bfZ,\bfY\right) &= \half \left\| \bfW - \bfZ\widetilde{\bfB}  \right\|_{F}^2 + \alpha \sum_{i=1}^{k}\|\bfM_i\|_2 \nonumber \\
    & + \left<\bfY,\widetilde{\bfM}-\bfZ\right> + \frac{\mu}{2}\left\| \widetilde{\bfM} - \bfZ \right\|_F^2,
\end{align}
where $\bfY$ is the dual variable and $\mu$ is a parameter controlling the step size in optimization. Then, the ADMM alternates the following steps until convergence:
\begin{align}
    &\widetilde{\bfM}^{t+1} = \arg\min_{\widetilde{\bfM}}\mathcal{L}_{\mu}\left(\widetilde{\bfM},\bfZ^t,\bfY^t\right); \label{eq:admm1} \\
    &\bfZ^{t+1} = \arg\min_{\bfZ}\mathcal{L}_{\mu}\left(\widetilde{\bfM}^{t+1},\bfZ,\bfY^t\right); \label{eq:admm2} \\
    &\bfY^{t+1} = \bfY^{t} + \mu~\left(\widetilde{\bfM}^{t+1}-\bfZ^{t+1}\right). \label{eq:admm3}
\end{align}

The subproblem in \refEq{eq:admm1} can be rewritten as
\begin{align}\label{eq:admm11}
&\min_{\widetilde{\bfM}}\mathcal{L}_{\mu}\left(\widetilde{\bfM},\bfZ^t,\bfY^t\right) \nonumber \\
=&\min_{\widetilde{\bfM}} \frac{1}{2}\left\| \widetilde{\bfM} - \bfZ^t + \frac{1}{\mu}\bfY^t \right\|_F^2 + \frac{\alpha}{\mu} \sum_{i=1}^{k}\|\bfM_i\|_2 \nonumber \\
=&\min_{\bfM_1,\cdots,\bfM_k} \sum_{i=1}^{k} \left\{ \frac{1}{2}\left\| \bfM_i - \bfQ_i^t \right\|_F^2 + \frac{\alpha}{\mu}\|\bfM_i\|_2 \right\},
\end{align}
where $\bfQ_i^t$ is the $i$-th column-triplet of $\bfZ^t - \frac{1}{\mu}\bfY^t$. Therefore, each $\bfM_i$ can be updated separately by solving a proximal problem based on \refProp{prop:prox2norm}:
\begin{align}\label{eq:updateM}
\bfM_i^{t+1} = \mathcal{D}_{\frac{\alpha}{\mu}}(\bfQ_i^t), ~~\forall i\in[1,k].
\end{align}

For the subproblem in \refEq{eq:admm2}, $\mathcal{L}_{\mu}\left(\widetilde{\bfM}^{t+1},\bfZ,\bfY^t\right)$ is a quadratic form and admits a closed-form solution:
{\small
\begin{align}\label{eq:updateZ}
\bfZ^{t+1} = \left( \bfW\widetilde{\bfB}^T+\mu\widetilde{\bfM}^{t+1}+\bfY^t \right) \left( \widetilde{\bfB}\widetilde{\bfB}^T+\mu\bfI \right)^{-1}.
\end{align}
}

The steps are summarized in \refAlg{alg:admm-noisy}. Following the standard proof in \cite{boyd2010distributed}, it can be shown that the sequences of values produced by the iterations in \refAlg{alg:admm-noisy} converge to the optimal solution of the problem \refEq{eq:admm-noisy}, which is also the optimal solution to the original problem \refEq{eq:finalnoisy}. In implementation, the convergence criterion and the adaptive scheme for tuning step size $\mu$ from \cite[Section 3]{boyd2010distributed} are adopted.

\begin{algorithm}
\LinesNumbered
\caption{ADMM to solve \refEq{eq:finalnoisy}}\label{alg:admm-noisy}
\small
\KwIn{$\bfW$, $\alpha$}
\KwOut{$\bfM_1,\cdots,\bfM_k$}
\vspace{0.5em}
initialize $\bfZ=\bfY=\bfzero$, $\mu>0$\;
\While{not converged}{
parallel \For{$i = 1$ \KwTo $k$}{
$\bfQ_i^t=$ the $i$-th column-triplet of $\bfZ^t - \frac{1}{\mu}\bfY^t$ \;
$\bfM_i^{t+1} = \mathcal{D}_{\frac{\alpha}{\mu}}(\bfQ_i^t)$ \;
}
$\bfZ^{t+1} = \left( \bfW\widetilde{\bfB}^T+\mu\widetilde{\bfM}^{t+1}+\bfY^t \right) \left( \widetilde{\bfB}\widetilde{\bfB}^T+\mu\bfI \right)^{-1}$ \;
$\bfY^{t+1} = \bfY^{t} + \mu~\left(\widetilde{\bfM}^{t+1}-\bfZ^{t+1}\right)$ \;
}
\vspace{0.5em}
\end{algorithm}
\vspace{-1em}

\subsection{Reconstruction}\label{sec:reconstr}

After \refEq{eq:finalnoisy} is solved, two algorithms for reconstructing the 3D shape from the estimated $\bfM_i$s are proposed.

\subsubsection{Direct reconstruction}\label{sec:direct}

The 3D shape is reconstructed according to the relaxed shape model \refEq{eq:shape-model-relaxed}, where $c_i$ and $\bfR_i$ are recovered from the estimated $\bfM_i$. The steps are summarized in \refAlg{alg:direct}, where $\bfm_i^{(j)}$ and $\bfr_i^{(j)}$ denote the $j$-th row vectors of $\bfM_i$ and $\bfR_i$, respectively. Note that $c_i=-\|\bfM_i\|_2$ is also a feasible solution. To eliminate the ambiguity, we assume that $c_i\geq 0$ and impose this constraint when learning the shape dictionary.

\begin{algorithm}\label{alg:direct}
\LinesNumbered
\caption{Direct reconstruction}
\KwIn{$\bfM_1,\cdots,\bfM_k$}
\KwOut{$\bfS$}
\vspace{0.5em}
\For{$i = 1$ \KwTo $k$}{
$c_i=\|\bfM_i\|_2$\;
$\bfr_i^{(1)}=\bfm_i^{(1)}/c_i$\;
$\bfr_i^{(2)}=\bfm_i^{(2)}/c_i$\;
$\bfr_i^{(3)}=\bfr_i^{(1)}\times\bfr_i^{(2)}$\;
$\bfR_i=[\bfr_i^{(1)},\bfr_i^{(2)},\bfr_i^{(3)}]^T$\;
}
$\bfS = \sum_{i=1}^{k} c_i\bfR_i\bfB_i$\;
\vspace{0.5em}
\end{algorithm}
\vspace{-1em}

\subsubsection{Rotation synchronization and refinement}\label{sec:refine}

The 3D shape is reconstructed according to the original shape model \refEq{eq:shape-model-original}. In order to recover the coefficient vector $\bfc$ and a single rotation $\bar\bfR$ from the estimated $\bfM_i$s, the following rotation synchronization problem is solved:
\begin{align}\label{eq:rot_avg}
    \min_{\bfc,\bar\bfR} & \sum_{i=1}^{k}\| \bfM_i - c_i \bar\bfR\|_F^2, \nonumber \\
    \st ~& \bar\bfR\bar\bfR^T = \bfI_2.
\end{align}
The problem in \refEq{eq:rot_avg} corresponds to the ``metric projection" step in \cite{paladini2012optimal,del2012bilinear}, which can be exactly solved by semidefinite programming \cite{paladini2012optimal} or approximately by a more efficient algorithm \cite[Section 4.4.1]{del2012bilinear}.

The solution from the synchronization described above is not necessarily the solution to the original problem \refEq{eq:originalnoisy}, but it can be used as a good initialization and refined by the alternating minimization in \refAlg{alg:altern}. In \refSec{sec:experiments}, it will be demonstrated that such a better initialization can clearly improve the performance of alternating minimization.
A potential issue in the synchronization is that, when the rotations obtained from the convex program are very different from each other, the consensus of them might be meaningless and fails to provide a good initialization.

The full steps are summarized in \refAlg{alg:refine}.

\begin{algorithm}\label{alg:refine}
\LinesNumbered
\caption{Refinement and reconstruction}
\KwIn{$\bfM_1,\cdots,\bfM_k$}
\KwOut{$\bfS$}
\vspace{0.5em}
initialize $\bfc$ and $\bar\bfR$ by solving \refEq{eq:rot_avg} using the algorithm in \cite[Section 4.4.1]{del2012bilinear}\;
\vspace{0.25em}
refine $\bfc$ and $\bar\bfR$ by \refAlg{alg:altern}\;
\vspace{0.25em}
$\bfS = \sum_{i=1}^{k} c_i\bfB_i$\;
\vspace{0.5em}
\end{algorithm}
\vspace{-1em}

\subsection{Outlier modeling}\label{sec:outliers}

In real applications, the 2D correspondences are given by detectors and there are likely to be gross errors (outliers). In this case, we explicitly model the outliers by a sparse matrix $\bfE$ and modify the formulation in \refEq{eq:finalnoisy} as
\begin{align}\label{eq:finalrobust}
    \min_{\bfM_1,\cdots,\bfM_k,\bfE,\bfT}~ & \half \left\| \bfW - \sum_{i=1}^{k} \bfM_i\bfB_i -\bfE - \bfT\bfone^T \right\|_{F}^2 \nonumber \\
    & + \alpha \sum_{i=1}^{k}\|\bfM_i\|_2 + \beta \|\bfE\|_1.
\end{align}
Note that in this case the translation $\bfT$ cannot be eliminated by centralizing the data at the beginning.

The problem in \refEq{eq:finalrobust} can be solved by ADMM as well. The algorithm is presented in \refApp{sec:alg-robust}.

\subsection{Dictionary learning}\label{sec:dl}

In general, one can simply use all existing or a few representative 3D models as the basis shapes. But when the collection is very large, e.g., a motion capture dataset often contains thousands of human skeletons, it is necessary to use the dictionary learning technique to learn a dictionary of basis shapes that can concisely summarize the variability in training data and enable a sparse representation. The dictionary learning problem can be formulated as follows:
\begin{align}\label{eq:dl}
    \min_{\bfB_1,\cdots,\bfB_k,\bfC}~ & \sum_{j=1}^{n} \half \| \bfS_j - \sum_{i=1}^{k} C_{ij}\bfB_i \|_{F}^2 + \lambda \|\bfC\|_1 \nonumber \\
    \st & ~C_{ij} \geq 0, ~ \|\bfB_i\|_F \leq 1, \nonumber \\
    &~ \forall i\in[1,k], ~ j\in[1,n],
\end{align}
where $\bfS_j$s denote the training shapes, $\bfB_i$s are the basis shapes to be learned, and $C_{ij}$ represents the $i$-th coefficient for the $j$-th training shape. The two terms in the cost function correspond to the reconstruction error and the sparsity of representation, respectively. The nonnegativity constraint is introduced to remove the ambiguity as discussed in \refSec{sec:direct}. The problem in \refEq{eq:dl} is locally solved by alternately updating $\bfC$ and $\bfB_i$s via projected gradient descent, an algorithm widely used in dictionary learning that converges to a local optimum \cite{mairal2010online}. The algorithm is summarized in \refApp{sec:alg-dl}.

\section{Experiments}\label{sec:experiments}

\subsection{Exact recovery}\label{sec:simulation}

\begin{figure}
\centering
\includegraphics[width=0.51\linewidth]{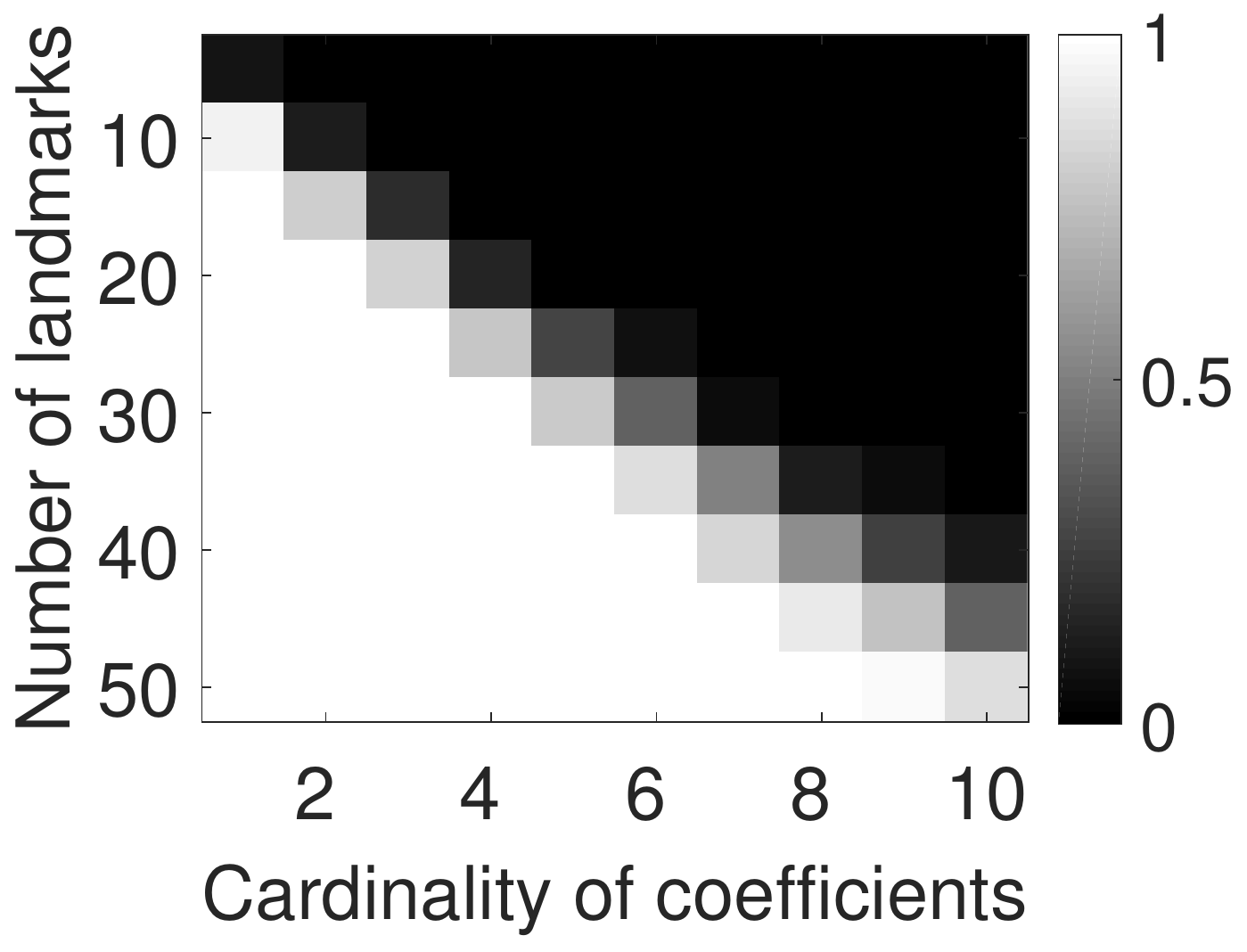}
\caption{The frequency of exact recovery. The intensity indicates the frequency of success under a variety of problem settings.} \label{fig:freqexact}
\vspace{-1em}
\end{figure}

We investigate whether the spectral-norm minimization in \refEq{eq:finalnoiseless} can exactly solve the ill-posed inverse problem based on the prior knowledge of sparsity and orthogonality using simulation.

More specifically, the data is synthesized with the following model:
\begin{align}\label{eq:simu-model}
\bfW =\sum_{i=1}^{k}\bfM_i\bfB_i,
\end{align}
where $\bfB_1,\cdots,\bfB_k\in\RR{3}{p}$ ($k=50$ and $p$ is varying) are randomly generated bases with entries sampled independently from the normal distribution $\mathcal{N}(0,1)$. Each $\bfM_i\in\RR{2}{3}$ is generated from $\bfM_i=c_i\bar{\bfR}_i$, where $c_i$ is sampled from a uniform distribution $\mathcal{U}(0,1)$ and $\bar{\bfR}_i$ denotes the first two rows of a rotation matrix with angles uniformly sampled from $[0,2\pi]$. Only a randomly selected subset of $c_1,\cdots,c_k$ are left as nonzero with a varying cardinality of $z$. Then, $\bfW$ and $\bfB$ are taken as input and $\bfM_i$s are estimated by solving \refEq{eq:finalnoiseless}. The recovery error is defined as
\begin{align}
\mbox{relative error} = {\|\hat\bfM-\widetilde{\bfM}\|_F}/{\|\widetilde{\bfM}\|_F},
\end{align}
where $\bfM_i$s are concatenated in $\widetilde{\bfM}$, and $\hat\bfM$ is the algorithm estimate. A recovery is regarded as exact if the relative error $< 10^{-3}$.

\refFig{fig:freqexact} shows the frequency of exact recovery with varying $p$ (number of landmarks) and $z$ (cardinality of true coefficients), which is evaluated over 100 trials for each setting. Note that the number of unknowns ($6k$) is much larger than the number of equations ($2p$). The proposed convex program can exactly solve the problem with a frequency equal to 1 in the lower-triangular area, where the number of landmarks is sufficiently large and the coefficients are truly sparse. This demonstrates the power of convex relaxation, which has proven to be successful in many inverse problems, e.g., compressed sensing \cite{candes2008introduction} and matrix completion \cite{candes2010power}. The performance drops in more ill-posed cases in the upper-triangular area. This observation is analogous to the phase transition in compressive sensing, where the recovery probability also depends on the number of observations and the underlying signal sparsity \cite{donoho2009observed}.

Note that the original model \refEq{eq:bilinear} is a special case of the relaxed model \refEq{eq:simu-model} when $\bfM_i$s are the scaled versions of each other. The result obtained by using the original model for data generation is similar to \refFig{fig:freqexact}.

\subsection{Human pose estimation}\label{sec:human}

The applicability of sparse representation for 3D human pose recovery has been thoroughly studied in previous work \cite{ramakrishna2012reconstructing,wang2014robust,fan2014pose}. In this paper, we aim to illustrate the advantage of the proposed convex program compared to the alternating minimization scheme.

Empirical evaluation is carried out on the CMU motion capture dataset \cite{mocap}. The dataset includes thousands of sequences of 3D human skeletons and part of them are selected for evaluation. More specifically, eight motion categories are considered including walking, running, jumping, climbing, boxing, dancing, sitting and basketball. For each motion, six sequences are collected from different subjects with three sequences for training and the rest for testing. Then, a dictionary is learned with all training sequences from various motions and used to reconstruct the test sequences. A 15-joint model (i.e., head, thorax, pelvis, shoulders, elbows, wrists, hips, knees, and ankles) is adopted to represent a human skeleton.

\begin{figure}
\centering
\includegraphics[width=0.5\linewidth]{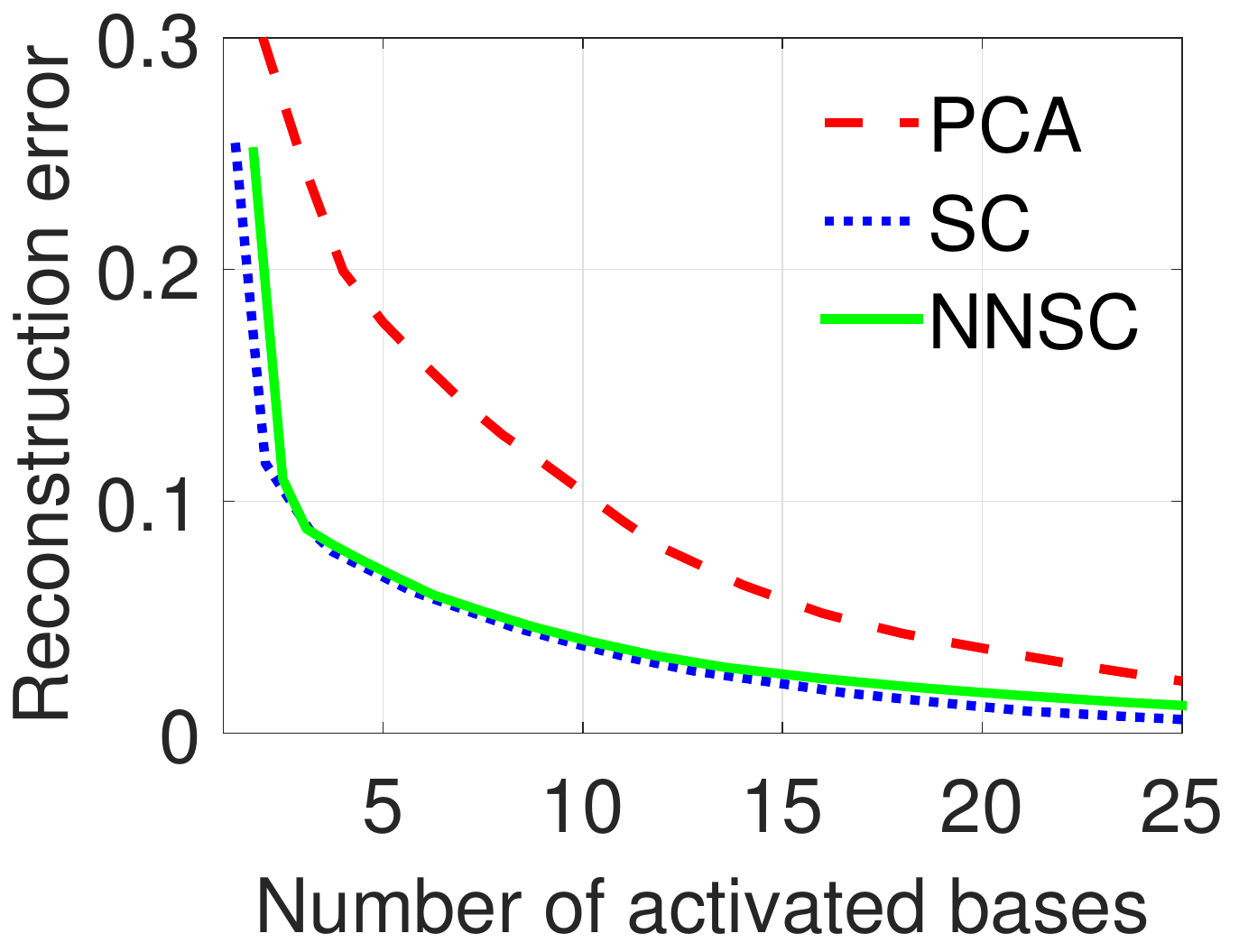}
\caption{{Representability of principal component analysis (PCA), sparse coding (SC) and nonnegative sparse coding (NNSC) for human poses.}} \label{fig:test-dl}
\end{figure}

\begin{figure*}
  \centering
  \begin{minipage}[b]{0.52\linewidth}
  \centering
  \includegraphics[width=\linewidth]{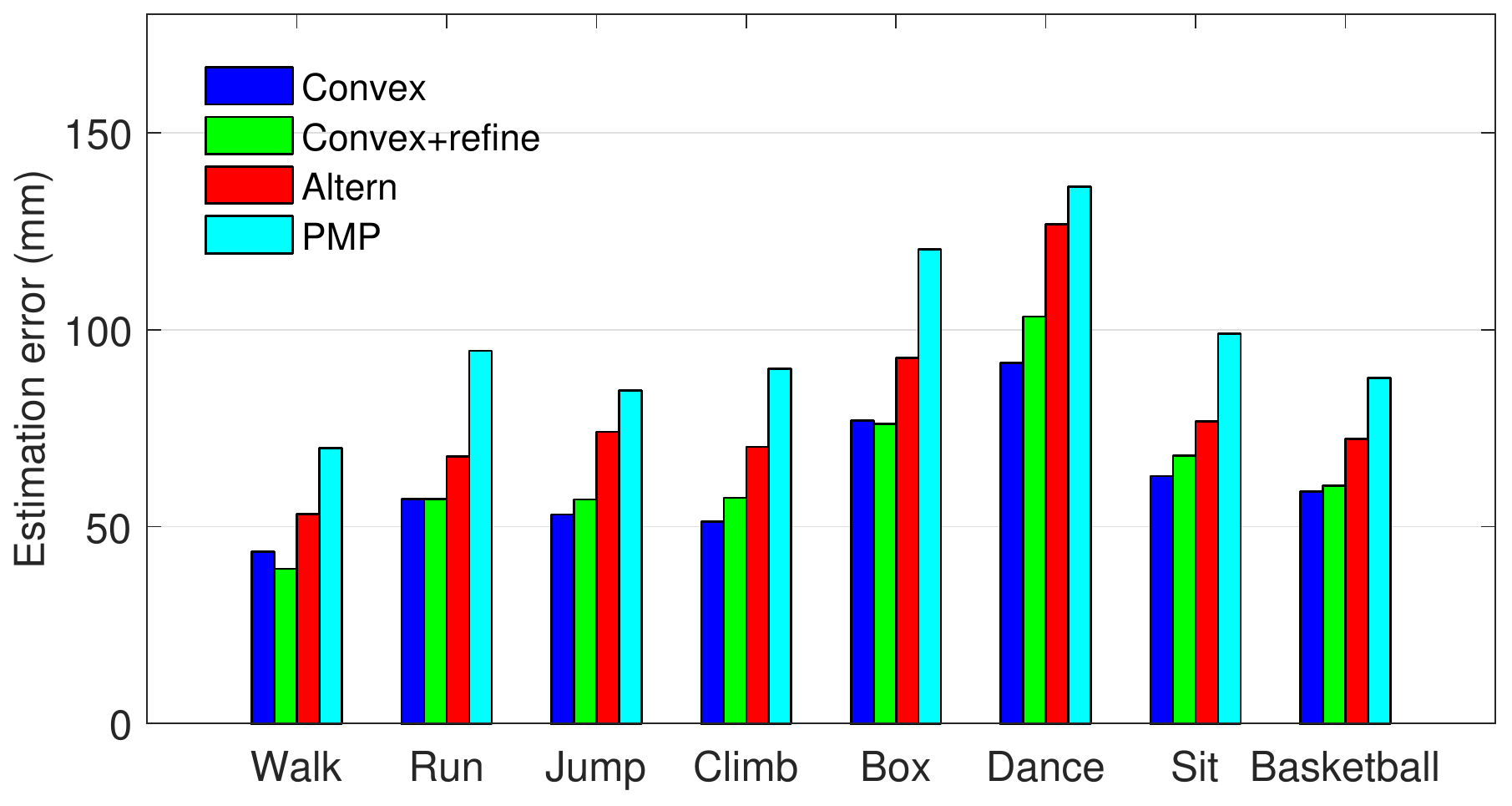} \\
  \centerline{(a)}
  \end{minipage}
  \hspace{1em}
  \begin{minipage}[b]{0.4\linewidth}
  \centering
  \includegraphics[width=\linewidth]{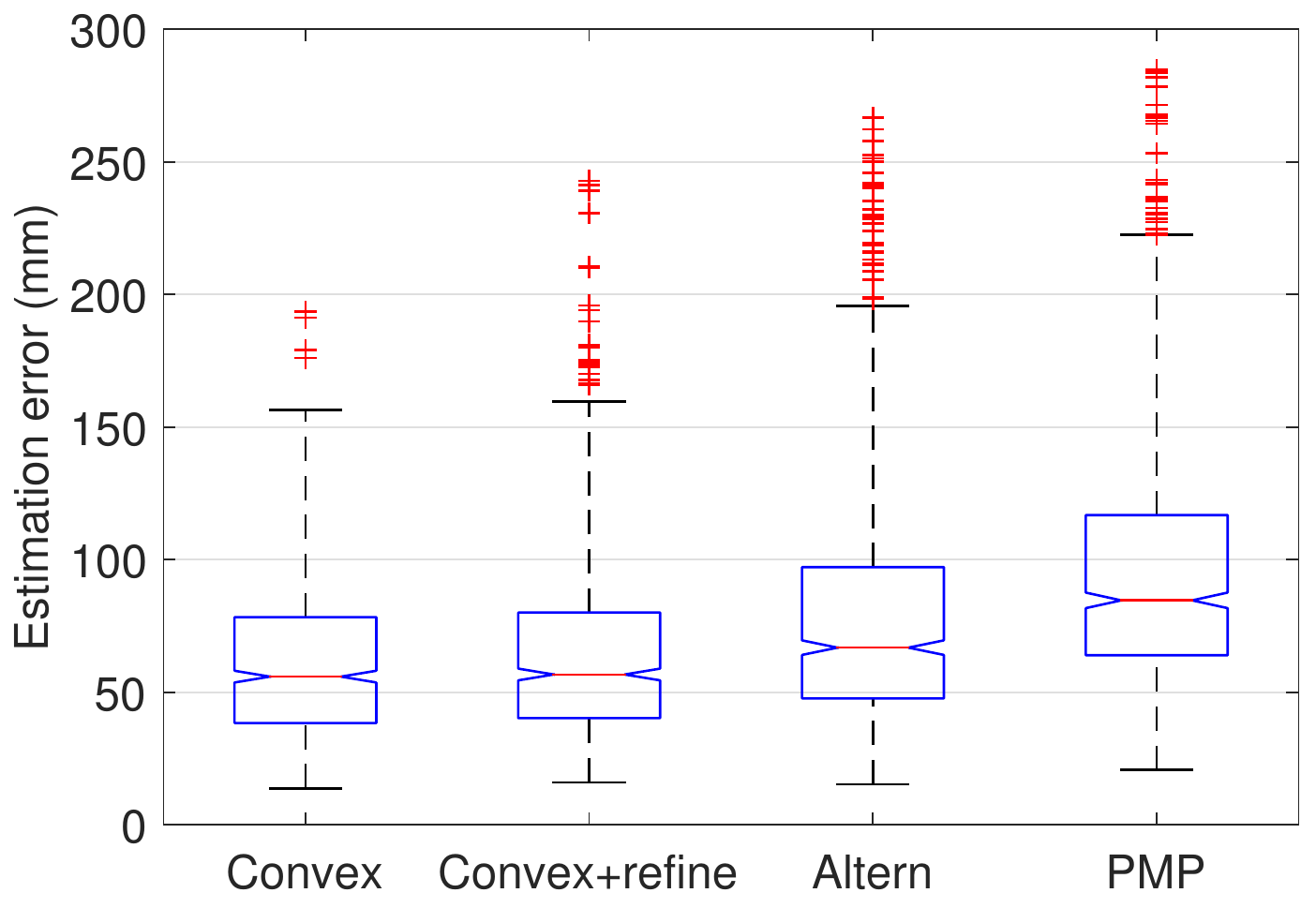} \\
  \centerline{(b)}
  \end{minipage}\\
  \vspace{1em}
  \begin{minipage}[b]{0.25\linewidth}
  \centering
  \includegraphics[width=\linewidth]{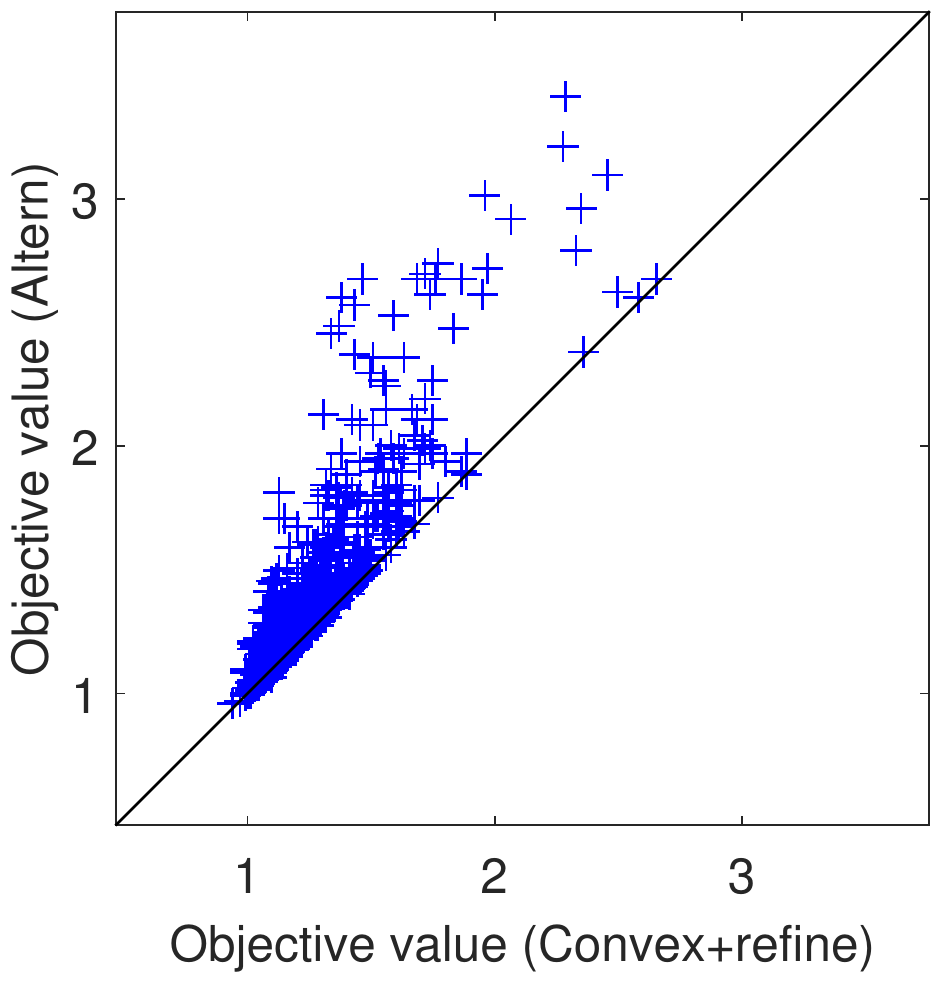} \\
  \centerline{(c)}
  \end{minipage}
  \hspace{1em}
  \begin{minipage}[b]{0.33\linewidth}
  \centering
  \includegraphics[width=\linewidth]{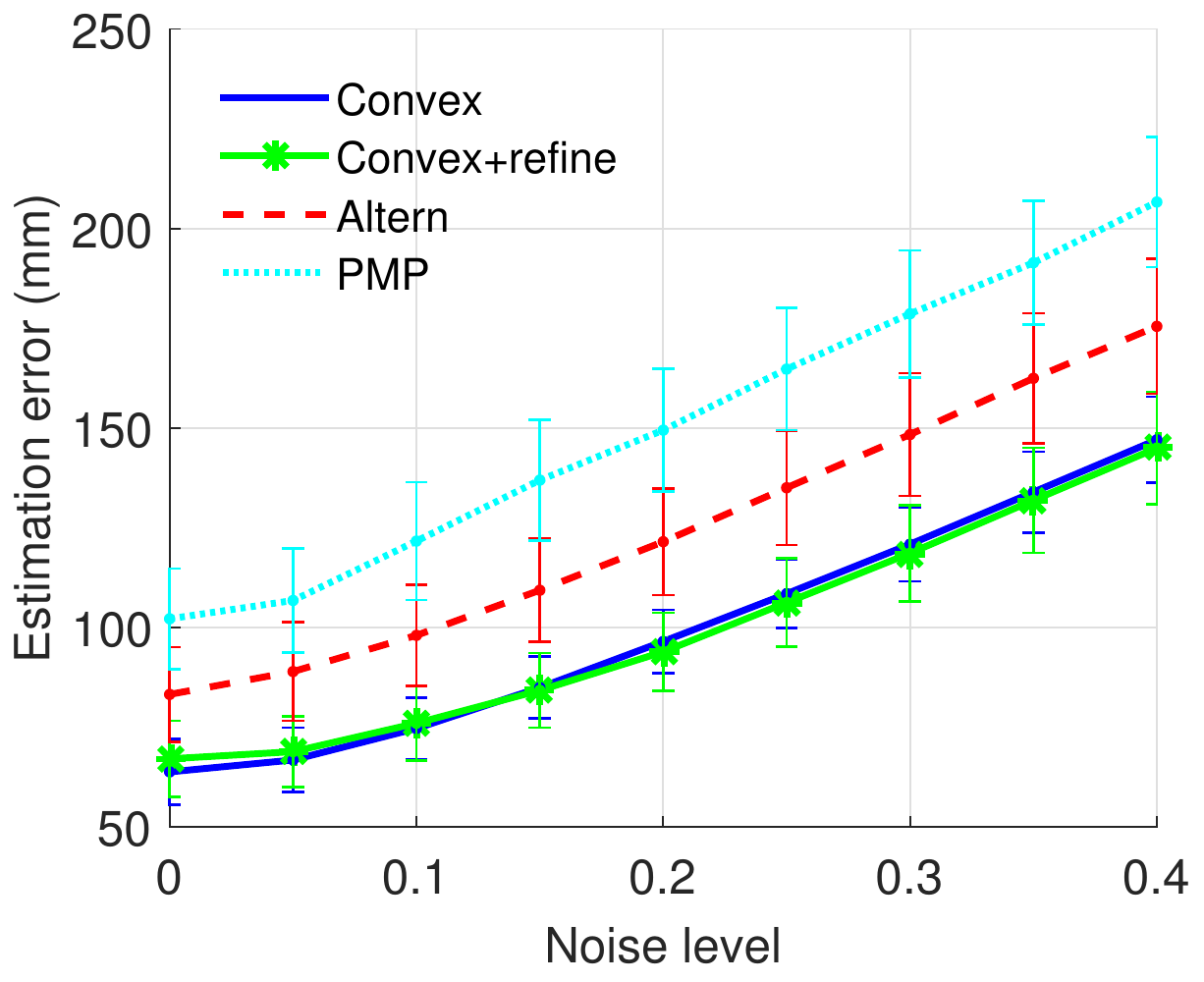} \\
  \centerline{(d)}
  \end{minipage}
  \begin{minipage}[b]{0.33\linewidth}
  \centering
  \includegraphics[width=\linewidth]{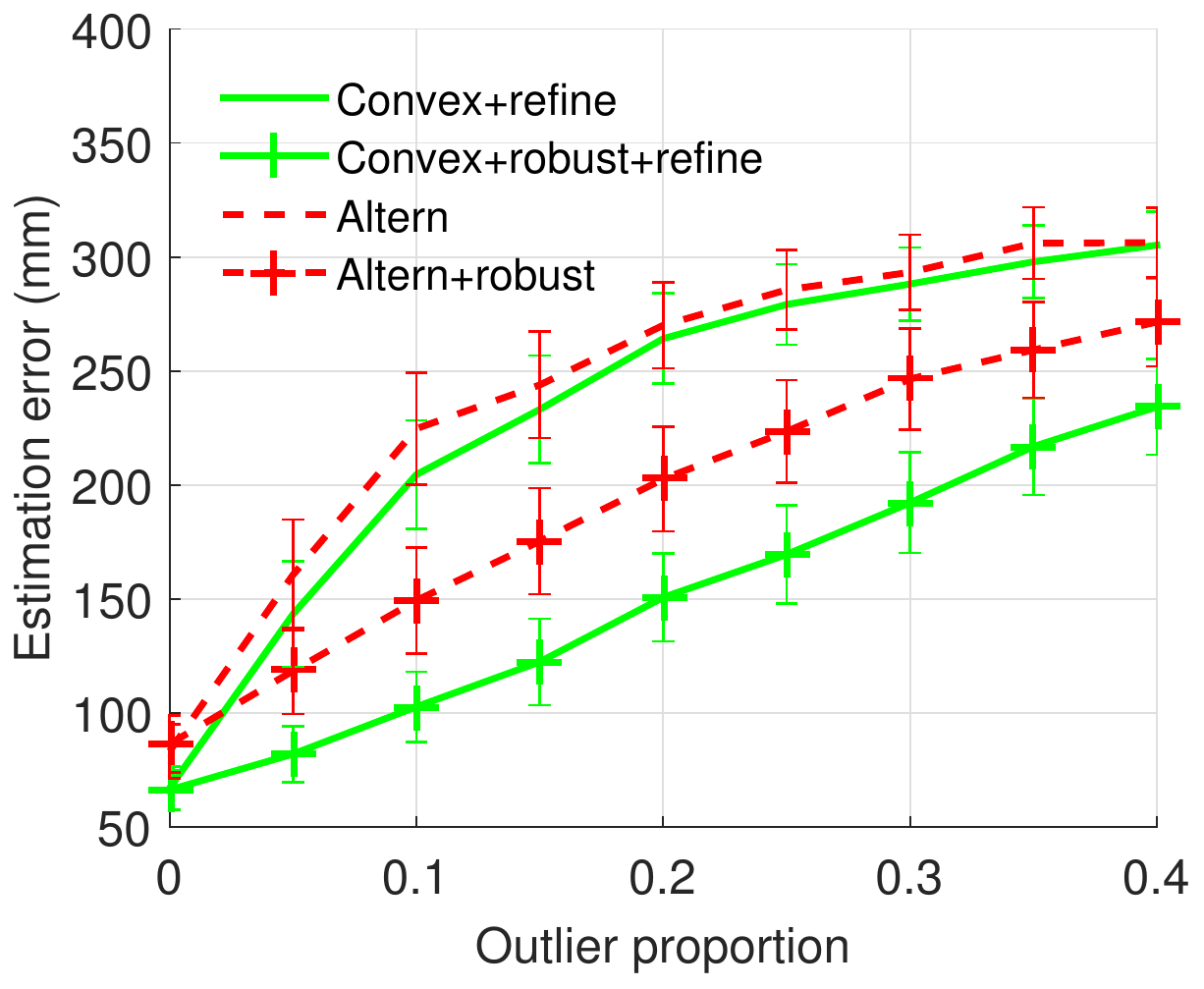} \\
  \centerline{(e)}
  \end{minipage}
  \caption{Quantitative results on the CMU motion capture dataset. (a) The mean 3D estimation errors for different motions. (b) The box plot of estimation errors. (c) The comparison between objective values achieved by the mean shape initialization and by the convex initialization. (d) The sensitivity to Gaussian noise. (e) The sensitivity to outliers. The length of error bar in (d) and (e) indicates half standard deviation. }\label{fig:mocap-quant}
\end{figure*}

To build the shape dictionary, all 3D poses in the training set are collected and aligned by the Procrustes method. The size of the dictionary $k$ is set as 128 yielding an overcomplete dictionary since the dimension of the basis vector is 45 (15 joints in 3D). The dictionary is initialized by uniformly sampling $k$ poses from the training data and updated by the algorithm described in \refSec{sec:dl}. The representability of the learned dictionary is measured by the reconstruction error, i.e., the first term in \refEq{eq:dl}. To evaluate the representability, \refEq{eq:dl} is solved multiple times with a variety of $\lambda$ yielding a sequence of reconstruction errors and the corresponding sparsity levels. The more general case without the nonnegativity constraint is also tested. The result is shown in \refFig{fig:test-dl}, which indicates that the sparse coding needs much fewer activated bases compared to PCA to achieve the same reconstruction error, {and adding the nonnegativity constraint sacrifices the representability but the difference is small in our experiment}.

To generate 2D testing data, an orthographic camera rotating around the subject (360 degrees per sequence) is simulated and the 3D joints are projected to 2D with the simulated camera. The following methods are compared:
\begin{itemize}
\item {\bf Convex}: the proposed convex program with direct reconstruction (\refAlg{alg:admm-noisy}+\refAlg{alg:direct}).
\item {\bf Convex+refine}: the proposed convex program followed by refinement (\refAlg{alg:admm-noisy}+\refAlg{alg:refine}). The local update of rotation is implemented by the manifold optimization over the Stiefel manifold with the Manopt toolbox \cite{boumal2014manopt}.
\item {\bf Altern}: the alternating algorithm with the mean shape as initialization (\refAlg{alg:altern}). The rotation is updated by SVD as in previous work \cite{ramakrishna2012reconstructing,fan2014pose,zhou2014sptio}.
\item {\bf PMP}: Projected Matching Pursuit from Ramakrishna et al. \cite{ramakrishna2012reconstructing}, which solves sparse coding by greedily selecting basis shapes instead of $\ell_1$ minimization. The dictionary for PMP is constructed by the method provided in the original paper but with the same training data as ours.
\end{itemize}

\begin{figure*}
\centering
\includegraphics[width=0.9\linewidth]{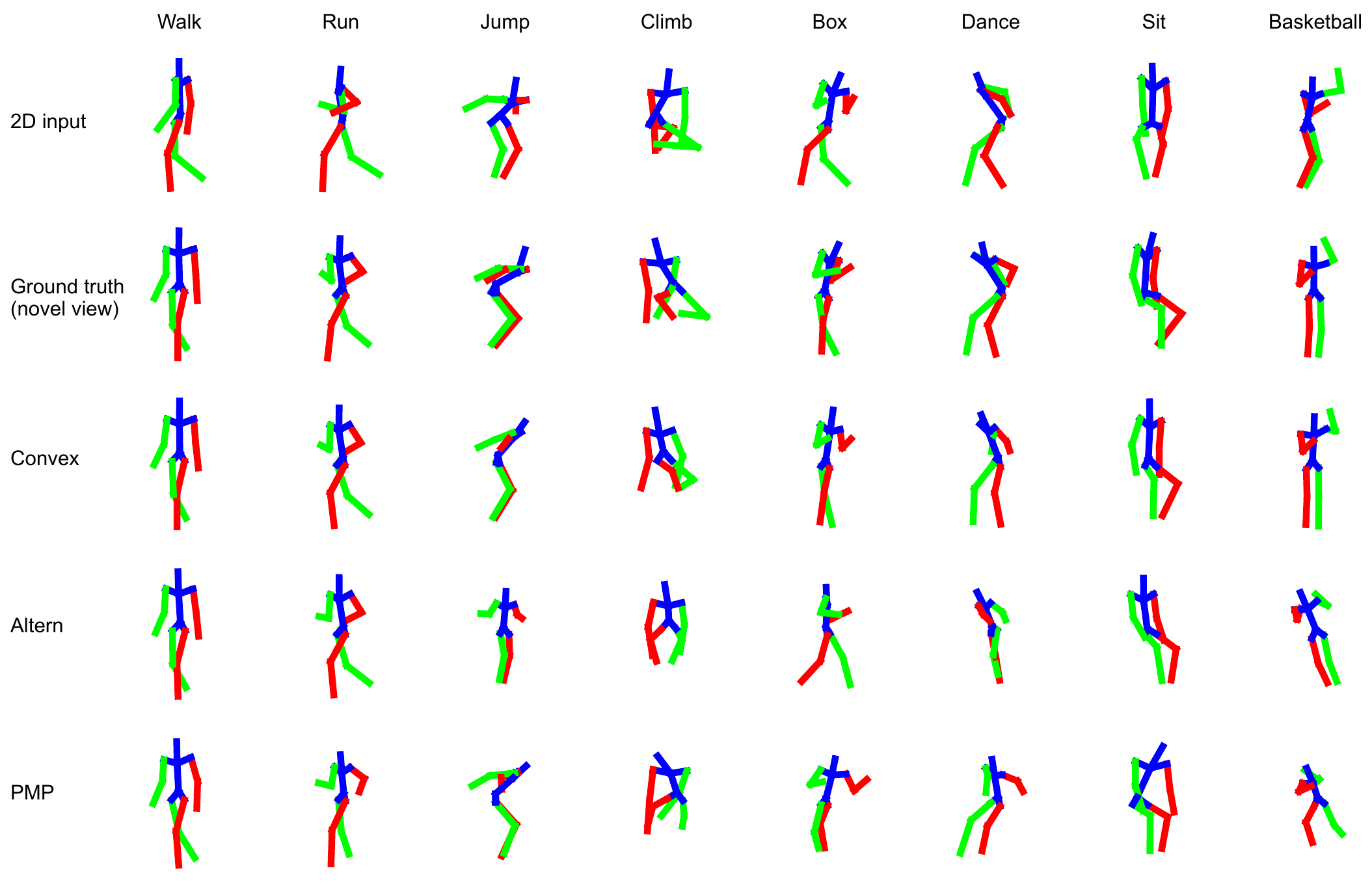}
  \caption{Qualitative results on the CMU motion capture dataset. The rows from top to bottom correspond to the input 2D poses, the ground-truth 3D poses (visualized in a novel view), and the reconstructions from the proposed method, the alternating minimization, and the PMP method \cite{ramakrishna2012reconstructing}, respectively. Red and green indicate left and right, respectively. }
  \label{fig:mocap-qual}
\end{figure*}

\begin{figure*}
  \centering
  \includegraphics[width=0.8\linewidth]{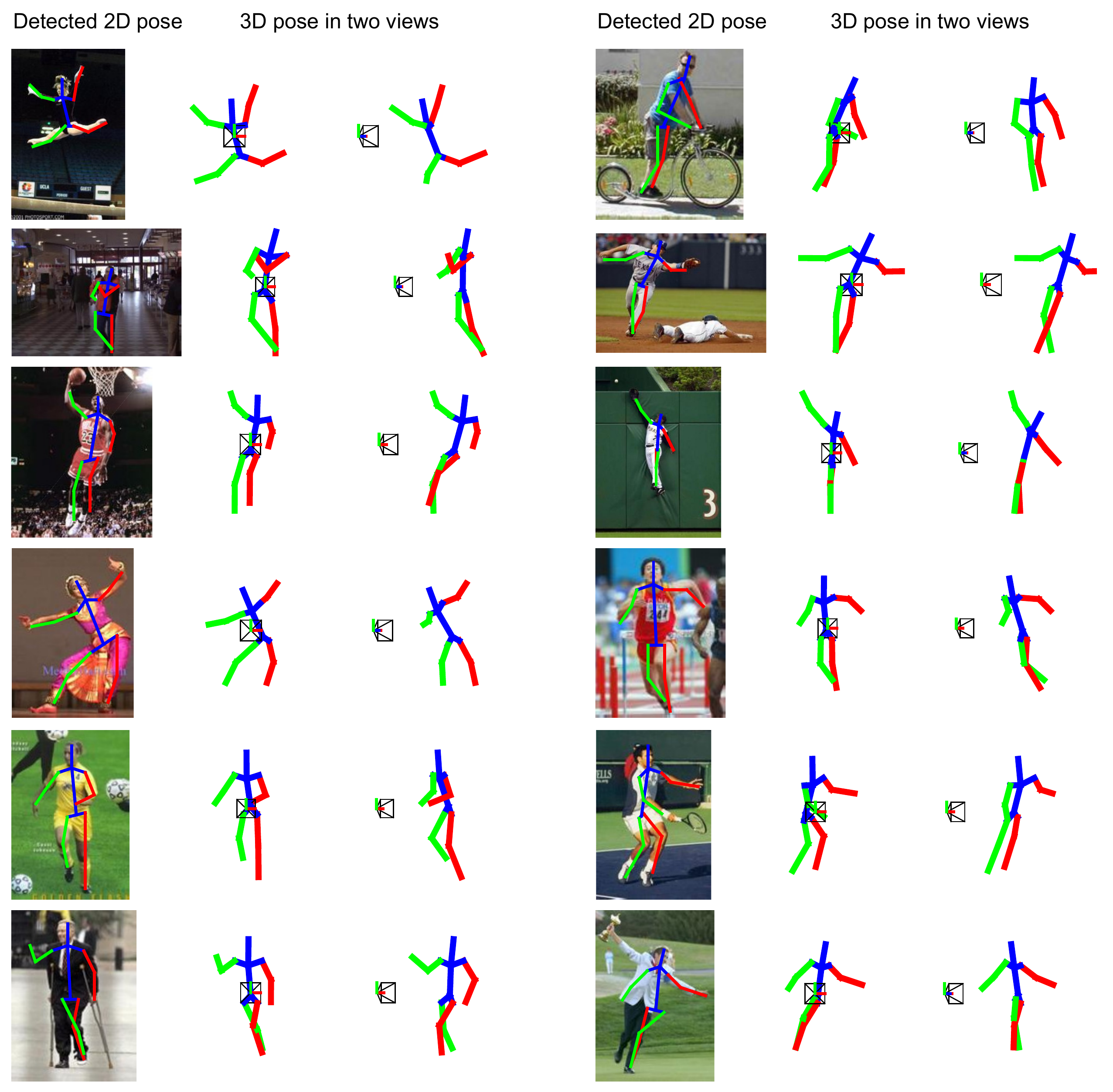}
  \caption{Qualitative results on the PARSE dataset with 2D poses detected by an existing 2D pose detector \cite{yang2011articulated}. In each example, the detected 2D pose superposed on the original image and the reconstruction in two different views are shown. The last row shows two failed examples.}\label{fig:parse}
\end{figure*}

The estimation error is evaluated by the 3D Euclidian distances between the estimated joint locations and the ground truth in the camera frame up to a translation and scaling. The mean errors for various motions are shown in \refFig{fig:mocap-quant}(a) and the statistics of errors in \refFig{fig:mocap-quant}(b). The proposed convex method consistently outperforms the nonconvex methods for all motions. Comparing ``convex+refine" and ``altern" which solve the same problem \refEq{eq:originalnoisy} at the end, the error is apparently decreased by using better initialization (the convex solution). To further verify the fact that the alternating minimization depends on initialization, we compare the objective values achieved by the alternating algorithm initialized by the convex solution and the mean shape. As shown in \refFig{fig:mocap-quant}(c), the objective values achieved by convex initialization are always smaller. {The mean errors of ``convex" and ``convex+refine" are comparable on the average. The rotation synchronization and refinement may lead to worse reconstructions if the convex solution fails to provide a good initialization or the optimal solution to \refEq{eq:originalnoisy} is not necessarily the best reconstruction, which may happen when the pose to be reconstructed cannot be well represented by the learned dictionary.}


Some visual examples of reconstructed poses for various motions are shown in \refFig{fig:mocap-qual}. All methods perform well in the ``walk" and ``run" examples, where the poses are close to the mean pose (stand upright). But for more complex motions, the nonconvex algorithms may be stuck at local optima as shown in other examples, while the proposed convex algorithm still obtains appealing results. A demonstration video showing several reconstructed 3D pose sequences is included in the supplementary material. The reconstructions by the convex method are much smoother over time compared to the ones by the nonconvex method which suffer from abrupt changes due to the inherent instability of nonconvex optimization.

The robustness of the proposed method against Gaussian noise is also tested. Gaussian noise with a varying standard deviation is added to the 2D input. The estimation error versus the standard deviation of the simulated noise is shown in \refFig{fig:mocap-quant}(d). The performances of the convex methods are consistently better than the alternating ones under all noise levels. To test the robustness of the extended model in \refEq{eq:finalrobust} against outliers, a proportion of landmarks are selected with their 2D locations changed to be some random values within a proper range (roughly the rectangle that covers the skeleton). For comparison, the nonconvex model \refEq{eq:originalnoisy} is also extended by introducing an outlier term similar to the one in \refEq{eq:finalrobust} and minimized by the alternating scheme. The curves of estimation error versus outlier proportion are shown in \refFig{fig:mocap-quant}(e). The robust models clearly outperform the original models and the convex method achieves a better performance than the alternating one.

To illustrate the real applicability of the proposed method, we apply the 2D pose detector from Yang and Ramanan \cite{yang2011articulated} on the PARSE dataset \cite{ramanan2006learning}, and use the proposed method to lift the detected 2D poses to 3D with the dictionary learned on the CMU dataset. Some selected results are shown in \refFig{fig:parse}.

\subsection{Car model estimation}

\begin{figure*}
  \centering
  \includegraphics[width=0.9\linewidth]{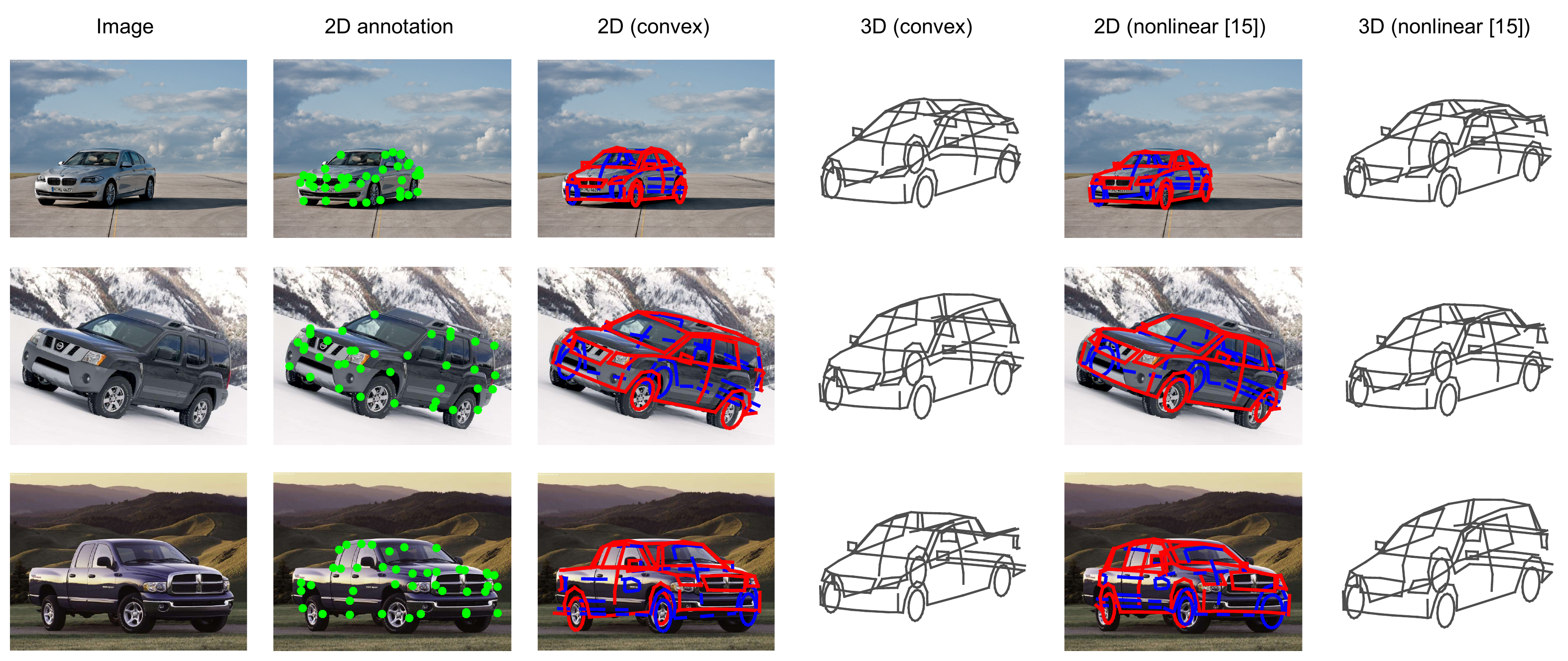}
  \caption{Qualitative results on the FG3DCar dataset given 2D correspondences. The columns correspond to the original image, the input 2D landmarks, and the 2D/3D models output by the proposed method and the nonlinear optimization \cite{lin2014jointly}, respectively. Only visible landmarks ($\sim40$ per image) are used for model fitting. The 3D models are visualized in a novel view different from the original image. The car models are the BMW 5 Series 2011 (sedan), the Nissan Xterra 2005 (SUV) and the Dodge Ram 2003 (pick-up truck), respectively.} \label{fig:fg3dcar-qual}
\end{figure*}

The applicability of the proposed method for car model estimation is demonstrated using the Fine-Grained 3D Car (FG3DCar) dataset \cite{lin2014jointly}, which provides images of cars, 2D landmark annotations and some 3D models. The 3D models of 15 cars are concatenated as the shape dictionary to reconstruct the other cars from the visible landmarks annotated in the images ($\sim$40 points per image). Several examples of reconstruction are shown in \refFig{fig:fg3dcar-qual}. For comparison, the reconstructions from the algorithm proposed in the original paper \cite{lin2014jointly} are also shown, which adopts a perspective camera model and locally updates the camera and shape parameters with a Jacobian system initialized by the mean shape and predefined camera parameters. The nonlinear optimization performs well in the sedan example but fails in the SUV and truck examples, where the models deviate far away from the mean shape. Similar results were reported in the original paper \cite{lin2014jointly} and the authors proposed to use the class-specific mean shape for better initialization. Instead, the proposed method can achieve appealing results with arbitrary initialization.

We observed that the alternating minimization in \refAlg{alg:altern} also performed very well for car model reconstruction, as the shape variability of cars was relatively small and the mean-shape initialization was very close to the true solution. But when there are outliers in the observation, the initial estimate of viewpoint might be unreliable and the alternating algorithm may fail. To validate this, some outliers are simulated. More specifically, for each image 20 annotated landmarks are randomly selected with their locations changed to be random values in the image plane. Then, the robust models are solved by the convex method and by the alternating algorithm initialized with the mean shape, respectively. Since there is no 3D ground truth (the 3D models provided in the dataset were reconstructed by the authors), the 2D shape fitting error is evaluated, i.e., comparing the 2D projection of the reconstructed 3D model with the original 2D annotations, to see whether the synthesized outliers can be corrected. The comparison is shown in \refFig{fig:fg3dcar-quant}. While the estimation errors are identical for most examples, there are still many examples in which the convex method performs apparently better than the alternating method. The same conclusion can be drawn by comparing the objective values. {The mean 2D errors of the ``altern'', ``convex'' and ``convex+refine'' methods are 44.47, 37.21 and 34.76 pixels, respectively. The rotation synchronization and refinement is beneficial here as the shape variability is small and a more constrained model is preferred.}

\begin{figure}
\begin{minipage}[b]{0.48\linewidth}
  \centering
  \includegraphics[width=\linewidth]{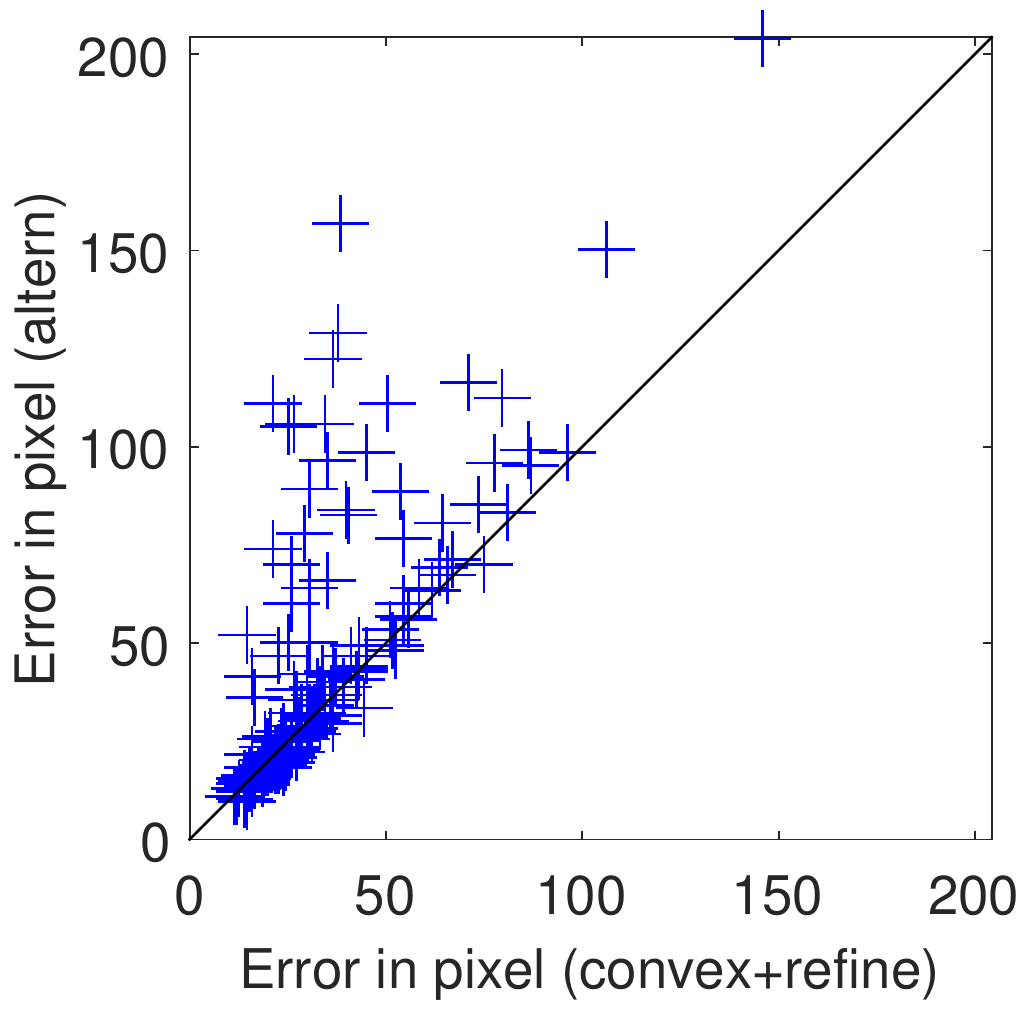} \\ (a)
\end{minipage}
\begin{minipage}[b]{0.48\linewidth}
  \centering
  \includegraphics[width=\linewidth]{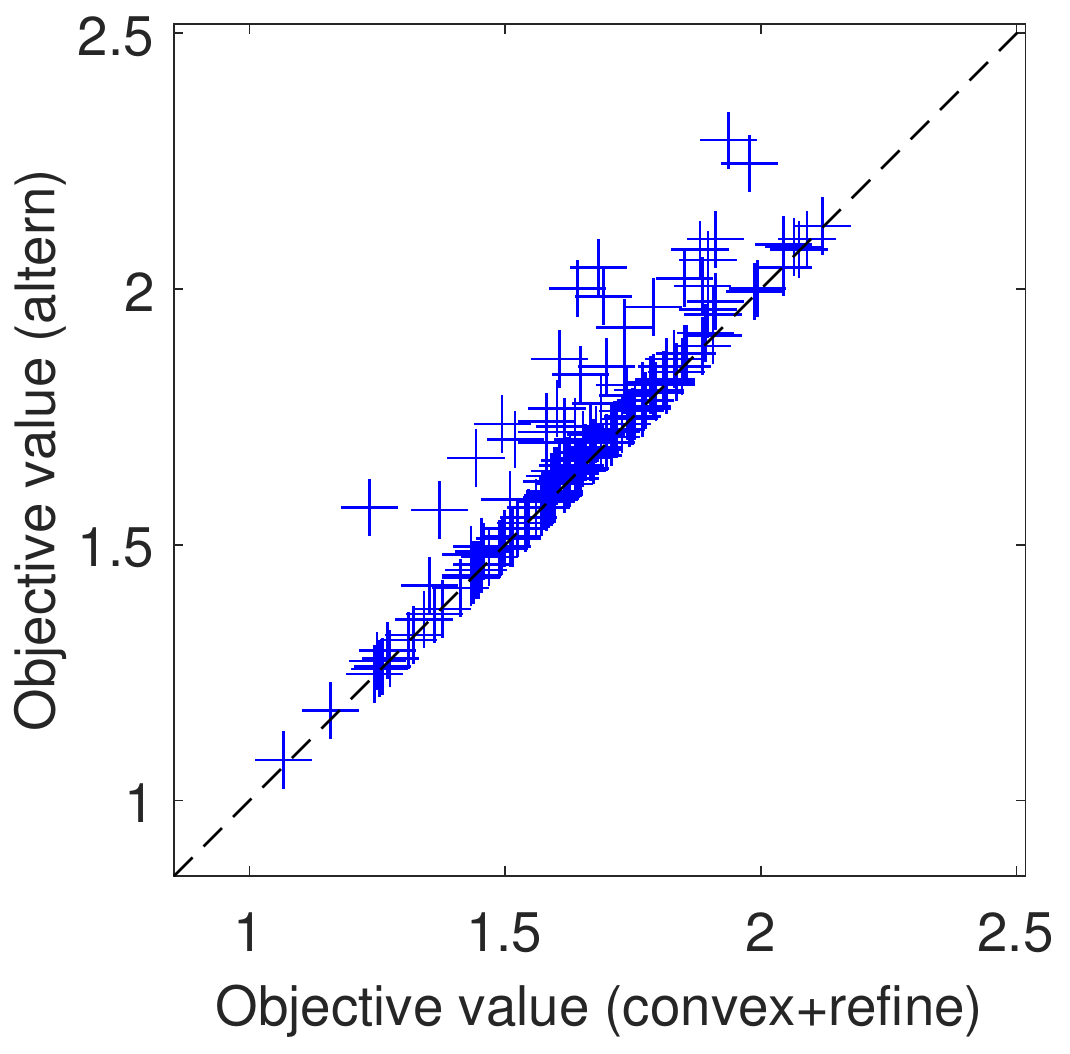} \\ (b)
\end{minipage}
  \caption{Quantitative comparison between the proposed method (``convex+refine") and the alternating minimization (``altern") on the FG3DCar dataset with synthesized outliers. (a) The 2D shape error. (b) The objective value. }\label{fig:fg3dcar-quant}
\end{figure}

Finally, we demonstrate the real applicability of the proposed method with learned landmark detectors. For each landmark, the image patches around the landmark in training images are collected as positive examples and a number of random patches in the background as negative examples. A classifier based on SVM with HOG features is trained with the collected examples. To handle 2D appearance variability, the positive examples are clustered and a mixture of classifiers is learned for each landmark. During testing, a test image is scanned by the learned classifiers and the locations with maximum responses are picked as the landmark locations. The detections are shown in the first column of \refFig{fig:fg3dcar-real}. The false detections are mostly due to self occlusion. The visibility is unknown and we fit the model with all detected landmarks despite of their correctness. The 2D fitted models and 3D reconstructions are shown in the next two columns, which demonstrate that the proposed method can robustly fit the model at the presence of many outliers and without knowing the visibility. The last row shows two failed examples, in which more than half of the landmarks are incorrectly located.

\begin{figure*}
  \centering
  \includegraphics[width=0.9\linewidth]{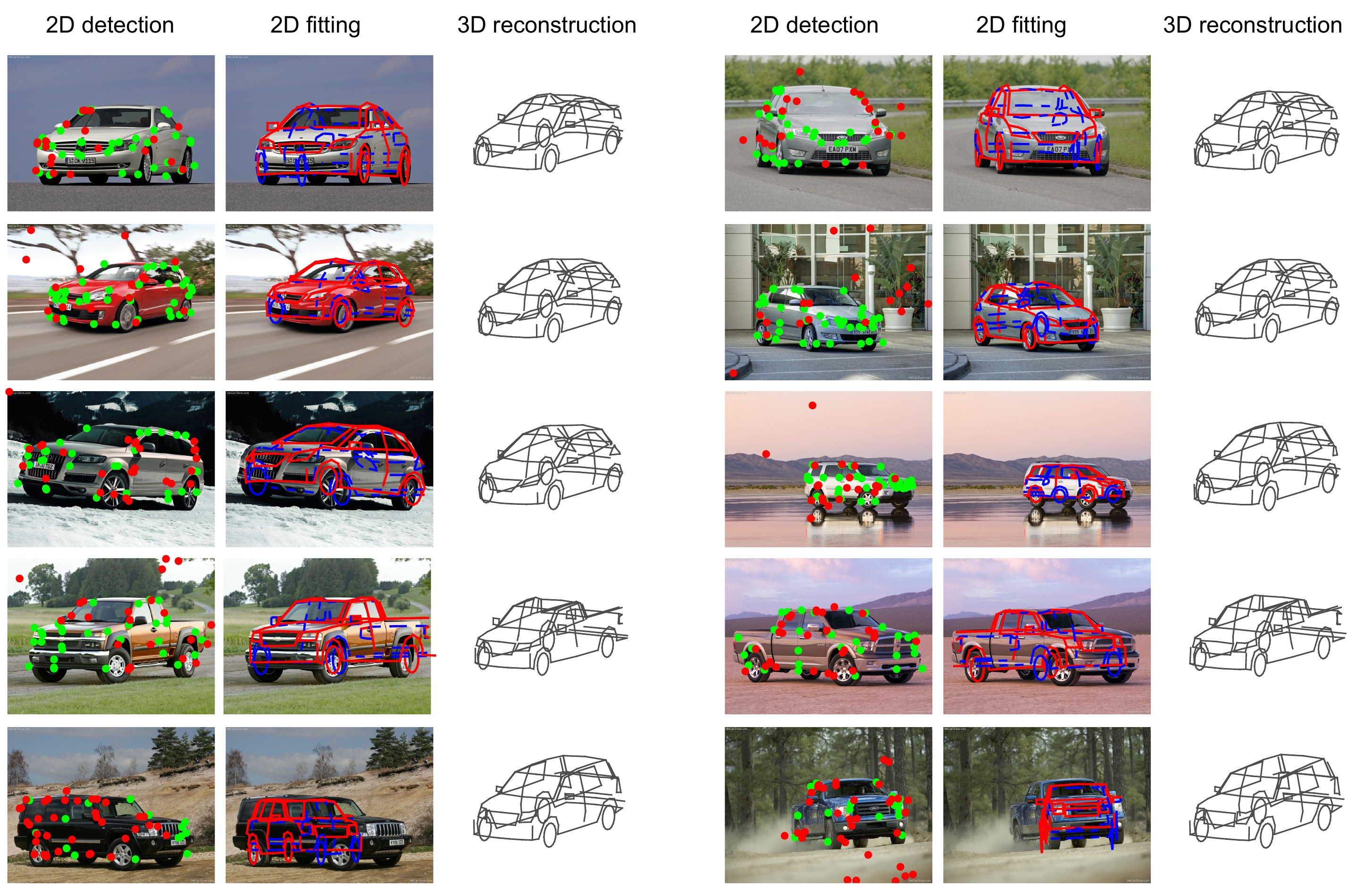}\\
  \caption{Qualitative results on the FG3DCar dataset with detected 2D landmarks. In each example, the detected landmarks superposed on the original image, 2D fitted model and 3D reconstruction are shown. The 2D landmarks are located by learned detectors based on HOG and SVM. The green and red dots correspond to the inliers and outliers, respectively. The outlier is defined as any detected landmark at least 30-pixel away from the manual annotation. Note that the inlier/outlier classification is only for the purpose of illustration and not for model fitting, i.e., the models are fitted with all landmarks despite of the correctness of detection. The last row shows two failed examples.} \label{fig:fg3dcar-real}
\end{figure*}

\subsection{Parameter tuning}
The parameters $\alpha$ and $\beta$ in the proposed model \refEq{eq:finalnoisy} and \refEq{eq:finalrobust} control the sparsity of shape coefficients and outliers, respectively. In general, the best values of $\alpha$ and $\beta$ can be obtained by cross validation for a specific task. But empirically we find that the estimation error changes very smoothly with the parameters varying in a proper range and a set of fixed parameters works well after normalizing the data. More specifically, the coordinates of input 2D shape and 3D basis shapes are scaled such that the average variance of each shape along all directions is unit. With the data normalization, the estimation error as a function of parameters over a subset of CMU motion capture data is shown in \refFig{fig:param}. Each curve has a relatively flat valley. In all experiments in this paper, we fix $\alpha=1$ and $\beta=0.1$ after the data normalization.

\begin{figure}
\begin{minipage}[b]{0.45\linewidth}
  \centering
  \includegraphics[width=\linewidth]{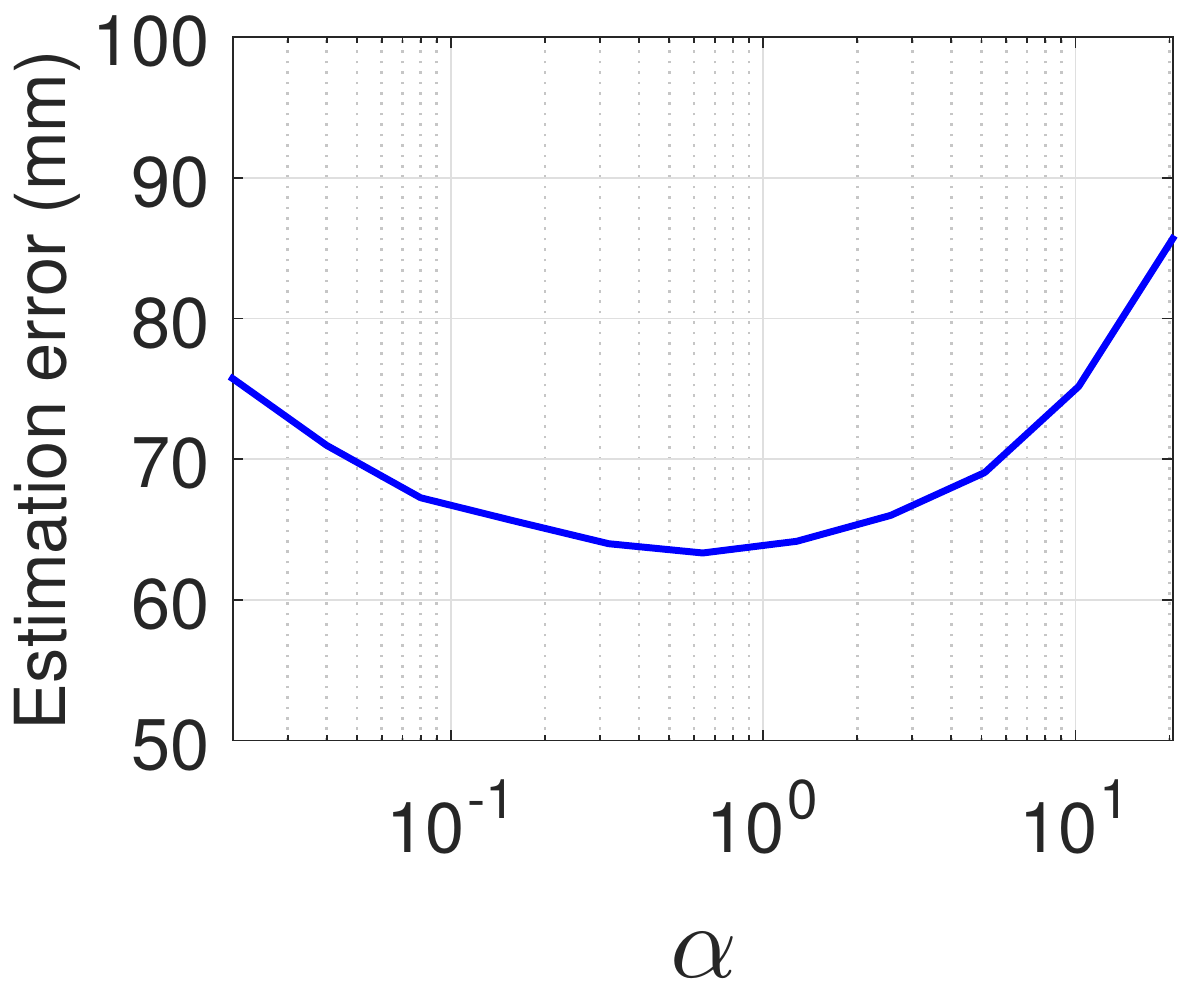}\\(a)
\end{minipage}
\begin{minipage}[b]{0.45\linewidth}
  \centering
  \includegraphics[width=\linewidth]{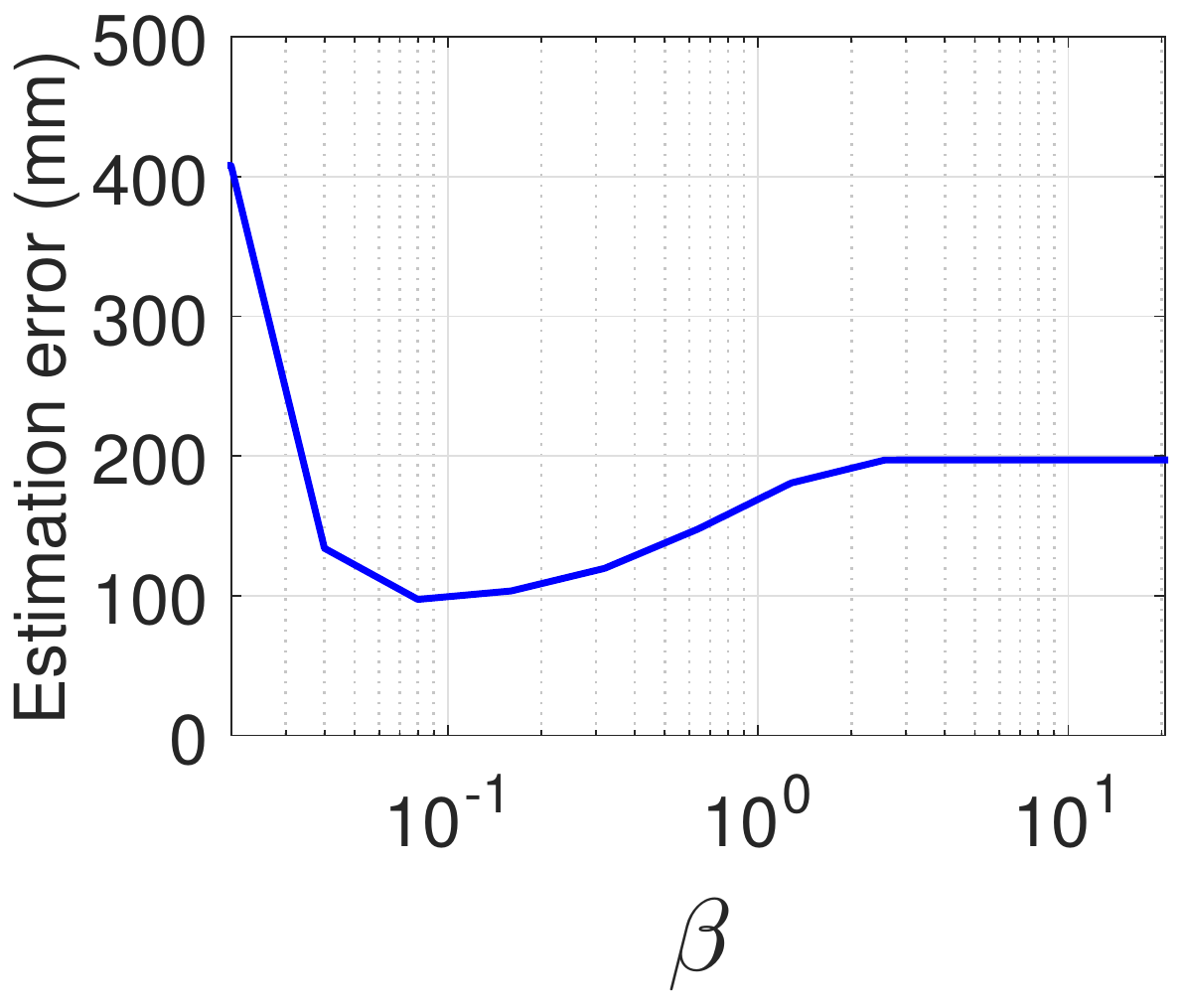}\\(b)
\end{minipage}
  \caption{Sensitivity to model parameters on the CMU motion capture dataset. (a) Estimation error of model \refEq{eq:finalnoisy} as a function of $\alpha$. (b) Estimation error of model \refEq{eq:finalrobust} as a function of $\beta$ with $\alpha=1$ and $20\%$ simulated outliers.} \label{fig:param}
\end{figure}

\subsection{Computational time}

The proposed algorithms are implemented in MATLAB and tested on a desktop with a Intel i7 3.4GHz CPU and 8G RAM. In our experiments, the ADMM algorithm generally converges within 500 iterations to reach a stopping criterion of $10^{-4}$. In human pose estimation, the average computational time of our algorithm is 1.91s per frame, while those of the alternating algorithm and the PMP algorithm \cite{ramakrishna2012reconstructing} are 1.34s and 3.70s, respectively. Note that the for-loops in \refAlg{alg:admm-noisy} (Line 3) can be paralleled to further accelerate the computation.

\section{Discussion}\label{sec:discussion}

In summary, we proposed a method for aligning a 3D deformable shape model with a sparse representation to 2D correspondences by solving a convex program, which guarantees global optimality. Intuitively, we adopted an augmented 3D shape space to achieve a linear representation of both shape and viewpoint variability and proposed to use the spectral-norm regularization to penalize invalid cases caused by the augmentation. We also extended our model to handle outliers in 2D correspondences. We empirically demonstrated the exact recovery property of the relaxed model as well as the advantage of convex optimization compared to alternative nonconvex methods especially when the shape variability was large and there were outliers.

We demonstrated the applicability of the proposed method for reconstructing human poses and car models. The point correspondences across 3D training shapes are provided in the used datasets. With the increasing availability of techniques for 3D shape matching, e.g. \cite{huang2013consistent,huang2014functional}, and databases with rich annotations, e.g. PASCAL3D+ \cite{yu2014beyond} and 3D ShapeNet \cite{shapenet}, we expect to see more applications of the proposed framework to reconstruct a variety of object categories in future.

\appendices

\section{Algorithm to solve the robust model}\label{sec:alg-robust}

The algorithm to solve \refEq{eq:finalrobust} is presented. The problem is rewritten as follows by introducing an auxiliary variable $\bfZ$:
\begin{align}\label{eq:admm-rewrite}
    \min_{\widetilde{\bfM},\bfZ,\bfE,\bfT}~ & \half \left\| \bfW - \bfZ\widetilde{\bfB} -\bfE - \bfT\bfone^T \right\|_{F}^2 \nonumber \\
    & ~~~~ + \alpha \sum_{i=1}^{k}\|\bfM_i\|_2 + \beta\|\bfE\|_1, \nonumber \\
    \st ~& \widetilde{\bfM} = \bfZ.
\end{align}

The augmented Lagrangian of \refEq{eq:admm-rewrite} is
\begin{align}
    \mathcal{L}_{\mu}\left(\widetilde{\bfM},\bfZ,\bfE,\bfT,\bfY\right) &= \half \left\| \bfW - \bfZ\widetilde{\bfB} -\bfE - \bfT\bfone^T \right\|_{F}^2 \nonumber \\
    & + \alpha \sum_{i=1}^{k}\|\bfM_i\|_2 + \beta\|\bfE\|_1 \\
    & + \left<\bfY,\widetilde{\bfM}-\bfZ\right> + \frac{\mu}{2}\left\| \widetilde{\bfM} - \bfZ \right\|_F^2. \nonumber
\end{align}

The following steps are iterated until convergence:
\begin{align}
    &\widetilde{\bfM}^{t+1} = \arg\min_{\widetilde{\bfM}}\mathcal{L}_{\mu}\left(\widetilde{\bfM},\bfZ^t,\bfE^t,\bfT^t,\bfY^t\right); \label{eq:Mstep} \\
    &\bfZ^{t+1} = \arg\min_{\bfZ}\mathcal{L}_{\mu}\left(\widetilde{\bfM}^{t+1},\bfZ,\bfE^t,\bfT^t,\bfY^t\right); \label{eq:Zstep} \\
    &\bfE^{t+1} = \arg\min_{\bfE}\mathcal{L}_{\mu}\left(\widetilde{\bfM}^{t+1},\bfZ^{t+1},\bfE,\bfT^t,\bfY^t\right); \label{eq:Estep} \\
    &\bfT^{t+1} = \arg\min_{\bfT}\mathcal{L}_{\mu}\left(\widetilde{\bfM}^{t+1},\bfZ^{t+1},\bfE^{t+1},\bfT,\bfY^t\right); \label{eq:Tstep} \\
    &\bfY^{t+1} = \bfY^{t} + \mu~\left(\widetilde{\bfM}^{t+1}-\bfZ^{t+1}\right). \label{eq:Ystep}
\end{align}
The whole procedure is very similar to \refAlg{alg:admm-noisy}. The differences are the steps to update $\bfE$ and $\bfT$. For \refEq{eq:Mstep} and \refEq{eq:Zstep}, they can be computed similarly as the steps in \refAlg{alg:admm-noisy}:
\begin{align}
\bfM_i^{t+1} = \mathcal{D}_{\frac{\alpha}{\mu}}(\bfQ_i^t), ~~\forall i\in[1,k],
\end{align}
where $\bfQ_i^t$ is the $i$-th column-triplet of $\bfZ^t - \frac{1}{\mu}\bfY^t -\bfE^t - \bfT^t\bfone^T$,
\begin{align}
\bfZ^{t+1} = & \left((\bfW-\bfE^t-\bfT^t\bfone^T)\widetilde{\bfB}^T+\mu\widetilde{\bfM}^{t+1}+\bfY^t \right) \nonumber \\
& \times \left( \widetilde{\bfB}\widetilde{\bfB}^T+\mu\bfI \right)^{-1}.
\end{align}
For \refEq{eq:Estep}, the problem is a proximal problem associated with the $\ell_1$-norm and can be solved by
\begin{align}
\bfE^{t+1} = \mathcal{S}_{\beta}\left(\bfW-\bfZ^{t+1}\bfB-\bfT^t\bfone^T\right),
\end{align}
where $\mathcal{S}_{\beta}(\bfX)_{ij}=\mbox{sign}(X_{ij})(X_{ij}-\beta)_+$ that refers to the elementwise soft-thresholding operator. For \refEq{eq:Tstep}, the solution is simply given by
\begin{align}
\bfT^{t+1} = \mbox{mean of each row}\left[\bfW-\bfZ^{t+1}\bfB-\bfE^{t+1}\right].
\end{align}

Note that there is no theoretical guarantee for the convergence of ADMM solving a multi-block problem such as \refEq{eq:admm-rewrite} \cite{chen2016direct}, though it always converges in our experiments.

\section{Algorithm to solve dictionary learning}\label{sec:alg-dl}

The algorithm to solve the dictionary learning problem in \refEq{eq:dl} is presented. The cost function can be rewritten as:
\begin{align}
f(\tilde\bfB,\bfC) = \half \| \bfS_j - \sum_{i=1}^{k} C_{ij}\bfB_i \|_{F}^2 + \lambda \sum_{i,j}C_{ij},
\end{align}
where $\tilde\bfB$ is the concatenation of $\bfB_i$s. Then, it is minimized by projected gradient descent and the algorithm is summarized in \refAlg{alg:dl}.

\begin{algorithm}\label{alg:dl}
\LinesNumbered
\caption{Dictionary learning}
\KwIn{$\bfS_1,\cdots,\bfS_n$}
\KwOut{$\bfB_1,\cdots,\bfB_k$}
\vspace{0.5em}
initialize $\tilde\bfB$, $\bfC$, step sizes $\delta_1$ and $\delta_2$\;
\While{not converged}{
\tcc{solve nonnegative sparse coding}
\While{not converged}{
$\bfC \leftarrow \bfC - \delta_1\nabla_{\bfC}f(\tilde\bfB,\bfC)$ \;
\For{$i = 1$ \KwTo $k$}{
\For{$j = 1$ \KwTo $n$}{
\uIf{$C_{ij} < 0$}{
$C_{ij} \leftarrow 0$ \;
}}}}
\tcc{update dictionary}
$\tilde\bfB \leftarrow \tilde\bfB - \delta_2\nabla_{\tilde\bfB}f(\tilde\bfB,\bfC)$ \;
\For{$i = 1$ \KwTo $k$}{
\uIf{$\|\bfB_i\|_F > 1$}{
$\bfB_i \leftarrow \bfB_i/\|\bfB_i\|_F$ \;
}}
}
\end{algorithm}

\vspace{-2em}
\section*{Acknowledgments}
The authors are grateful for support through the following grants:
NSF-DGE-0966142, NSF-IIS-1317788, NSF-IIP-1439681, NSF-IIS-1426840,
ARL MASTCTA W911NF-08-2-0004, ARL RCTA W911NF-10-2-0016, ONR N000141310778
The authors gratefully acknowledge Dr Xiaoyan Hu from Beijing Normal University
for helpful discussions and processing of motion capture data.

\bibliographystyle{IEEEtran}
\bibliography{mybib}

\end{document}